%% file: main.tex
\documentclass{article}

\usepackage{natbib}

\usepackage[eandd, nonanonymous]{style}

\usepackage[utf8]{inputenc} 
\usepackage[T1]{fontenc}    
\usepackage{hyperref}       
\usepackage{url}            
\usepackage{booktabs}       
\usepackage{amsfonts}       
\usepackage{nicefrac}       
\usepackage{microtype}      
\usepackage{xcolor}         
\usepackage{stmaryrd}

\usepackage{amsmath} 
\usepackage{ amssymb }
\usepackage{amsfonts}
\usepackage[most]{tcolorbox}

\usepackage{pifont} 

\usepackage[]{algorithm2e}
\usepackage{ulem} 
\usepackage{caption} 

\usepackage{cleveref}
\usepackage{mathrsfs}
\usepackage[OT1]{fontenc}

\title{Rank and computation of the pathlifting Jacobian of a DAG ReLU network 
}

 \author{} %

\usepackage{todonotes}

\newcommand\RGcmtout[1]{}
\newcommand\ALcmtout[1]{}
\input{notations_rank}

\input{fancy_theorem}
\input{propandtheorem.tex}

\begin{document}

\maketitle
\begin{abstract}
This paper provides a self-contained proof of the rank of the pathlifting Jacobian of a DAG ReLU network by performing an induction on the network's number of hidden nodes. 
In fact, the induction is elementary, and the key recipe is to consider the skeleton matrix of the network, a sparse matrix encoding the network paths, and transform the representation of one of its hidden neurons into an output node. 
The proof relies on intermediate propositions which link the pathlifting, its Jacobian, the network parameters, and its skeleton matrix, which, on top of permitting to conclude on the rank of the pathlifting Jacobian, also provide a way to compute it without backpropagation and whose computation cost is super efficient in practice compare to usual backpropagation. 
The paper is provided with a \href{https://gitlab.inria.fr/mverbock/pathliftingandskeleton/-/tree/main?ref_type=heads}{Python module} that implements the different propositions of the paper for feed forward networks and is used to experimentally quantifies the computational gain of computing the pathlifting Jacobian with the proposed theory.
\end{abstract}
\input{Introduction}

\input{Theorems}

\input{SketchOfProof.tex}
\input{Experiment.tex}

\input{Conclusion.tex}

\bibliographystyle{plainnat}
\bibliography{bibli}
\newpage
\appendix
\input{Proofs.tex}

\end{document}

%% file: notations_rank.tex
\usepackage{xcolor}
\definecolor{colorEum}{HTML}{bfbf40}
\definecolor{colorEup}{HTML}{d22da9}
\definecolor{colorEkm}{HTML}{33cccc}
\definecolor{colore}{HTML}{2644d9}

\newcommand{\flower}{\mathrel{\text{\ding{95}}}}

\newcommand{\R}{\mathbb{R}}

\newcommand{\Skeleton}{B}

\newcommand{\Paths}{\mathcal{P}}

\newcommand{\dimx}{r}
\newcommand{\dimy}{q}
\newcommand{\dimTheta}{d}

\newcommand{\dimPathL}{P}

\newcommand{\dimNode}{m}

\newcommand{\PathL}{\Phi}
\newcommand{\NN}{\mathcal{F}}
\newcommand{\Nodes}{N}
\newcommand{\Edges}{E}
\newcommand{\edge}{e}
\newcommand{\node}{n}
\newcommand{\Graph}{\mathcal{G}}
\newcommand{\pathn}{p}

\newcommand{\deep}{L}
\newcommand{\width}{w}

\newcommand{\rk}{{\normalfont \text{rk}}}

\newcommand{\diag}{{\normalfont \text{diag}}}

\newcommand{\scalarp}[1]{\left<#1\right>}
\newcommand{\trans}[1]{\ensuremath{#1^{\top}}}

\newcommand{\norm}[1]{\ensuremath{\left|\left|#1\right|\right|}}

\DeclareMathOperator*{\sign}{sign}

\newcommand{\VecOnes}{
    \colorlet{oldcolor}{.}
   \begin{matrix}
        1 \\
        \vdots \\
        1 
    \end{matrix}}

\newcommand{\VecmOnes}{
    \colorlet{oldcolor}{.}
   \begin{matrix}
        -1 \\
        \vdots \\
        -1 
    \end{matrix}}

\newcommand{\VecZeros}{
    \colorlet{oldcolor}{.}
   \begin{matrix}
        0 \\
        \vdots \\
        0 
    \end{matrix}}

\newcommand{\VecDot}{
    \colorlet{oldcolor}{.}
   \begin{matrix}
        \hdots \\
        \ddots \\
        \hdots
    \end{matrix}}

\newcommand{\OrderSkeleton}{\begin{pmatrix}
            &{\color{black}\VecOnes} & {\color{blue}\VecZeros} & {\color{blue}\VecDot} & {\color{blue}\VecZeros} & {\color{black}A_1} & {\color{magenta}\mathbf{0}}&M_1&\\
             &{\color{blue}\VecZeros} & {\color{black}\VecOnes} & {\color{blue}\VecDot} & {\color{blue}\VecZeros} & {\color{black}A_2} & {\color{magenta}\mathbf{0}}&M_2&\\
             &\cdot & \cdot & \ddots & \cdot & \vdots & {\color{magenta}\vdots} &\cdot&\\
            &{\color{blue}\VecZeros} & {\color{blue}\VecDot} & {\color{blue}\VecZeros} & {\color{black}\VecOnes} & {\color{black}A_{-1}} & {\color{magenta}\mathbf{0}}&M_{-1}&\\
            &{\color{green}\mathbf{0}} & {\color{green}\mathbf{0}} &{\color{green}\mathbf{0}} &{\color{green}\mathbf{0}} & {\color{green}\mathbf{0}} & D&\cdot&
        \end{pmatrix}}

\newcommand{\IntermediatSkeleton}{\begin{pmatrix}
            &{\color{red}\VecZeros} & {\color{black}\VecZeros} & {\color{black}\VecDot} & {\color{black}\VecZeros} & {\color{black}A_1} & {\color{magenta}\mathbf{0}}&M_1&\\
             &{\color{red}\VecmOnes} & {\color{black}\VecOnes} & {\color{black}\VecDot} & {\color{black}\VecZeros} & {\color{black}A_2} & {\color{magenta}\mathbf{0}}&M_2&\\
             &\cdot & \cdot & \cdot & \cdot & \cdot & \color{magenta}{\vdots} &\cdot&\\
            &{\color{red}\VecmOnes} & {\color{black}\VecDot} & {\color{black}\VecZeros} & {\color{black}\VecOnes} & {\color{black}A_{-1}} & {\color{magenta}\mathbf{0}}&M_{-1}&\\
            &{\color{green}\mathbf{0}} & {\color{green}\mathbf{0}} &{\color{green}\mathbf{0}} &{\color{green}\mathbf{0}} & {\color{green}\mathbf{0}} & D&\cdot&
        \end{pmatrix}}

\newcommand{\ModifiedSkeleton}{\begin{pmatrix}
            &{\color{red}\VecZeros} & {\color{black}\VecZeros} & {\color{black}\VecDot} & {\color{black}\VecZeros} & {\color{black}A_1} & {\color{magenta}\mathbf{0}}&M_1&\\
             &{\color{red}\VecZeros} & {\color{black}\VecOnes} & {\color{black}\VecDot} & {\color{black}\VecZeros} & {\color{black}A_2} & {\color{magenta}\mathbf{0}}&M_2&\\
             &\cdot & \cdot & \cdot & \cdot & \cdot & \color{magenta}{\vdots} &\cdot&\\
            &{\color{red}\VecZeros} & {\color{black}\VecDot} & {\color{black}\VecZeros} & {\color{black}\VecOnes} & {\color{black}A_{-1}} & {\color{magenta}\mathbf{0}}&M_{-1}&\\
            &{\color{green}\mathbf{0}} & {\color{green}\mathbf{0}} &{\color{green}\mathbf{0}} &{\color{green}\mathbf{0}} & {\color{green}\mathbf{0}} & D& \cdot&
        \end{pmatrix}}

\newcommand{\DiagZOnes}{ \begin{pmatrix}
        1 & 0 & \cdot & \cdot\\
        \vdots & \vdots & \cdot & \cdot\\
        1 & 0 & \cdot & \cdot\\
        0 & 1 & 0& \cdot\\
        \vdots & \vdots & \vdots & \cdot\\
        0 & 1 & 0& \cdot\\
        \cdot & \cdot & \ddots & \cdot\\
        \cdot & \cdot & 0 & 1\\
        \cdot & \cdot & \vdots & \vdots\\
        \cdot & \cdot & 0 & 1\\
    \end{pmatrix}} 

%% file: fancy_theorem.tex
\usepackage{amsthm}

\newtheorem{theorem}{Theorem}
\newtheorem{definition}{Definition}
\newtheorem{corollary}{Corollary}
\newtheorem{proposition}{Proposition}
\newtheorem{lemma}{Lemma}
\newtheorem{remark}{Remark}

\newtheoremstyle{upright}   
  {10pt}                    
  {10pt}                    
  {}                        
  {}                        
  {\bfseries}               
  {.}                       
  { }                       
  {}                        

\theoremstyle{upright}

\tcbuselibrary{theorems}
\definecolor{theoremblue}{HTML}{2196F3} 
\definecolor{theoremred}{HTML}{f03f09} 
\definecolor{Defblue}{HTML}{0000A0} 
\definecolor{Backgroundblue}{HTML}{0000A0} 
\definecolor{proofpurple}{HTML}{9013FE} 
\definecolor{defyellow}{HTML}{FFEB3B} 
\definecolor{assumptionred}{HTML}{ff3b3b} 
\definecolor{conventionred}{HTML}{ff3b3b} 
\definecolor{remarkblack}{HTML}{000000} 
\definecolor{remarkorange}{HTML}{dc5800} 
\definecolor{examplegreen}{HTML}{00695c} 
\definecolor{schemarouge}{HTML}{EE220C} 
\definecolor{schemableu}{HTML}{4599EF} 

\tcolorboxenvironment{theorem}{
  enhanced jigsaw,
  sharp corners,
  colframe=theoremblue,
  colback=theoremblue!2,
  coltitle=black,
  fonttitle=\bfseries,
  borderline west={2pt}{0pt}{theoremred},
  before skip=10pt,
  after skip=10pt,
  boxrule=0pt,
  left=10pt,
  right=10pt,
  breakable
}

\tcolorboxenvironment{definition}{
  enhanced jigsaw,
  sharp corners,
  colframe=Defblue,
  colback=Backgroundblue!2,
  coltitle=black,
  fonttitle=\bfseries,
  borderline west={2pt}{0pt}{Defblue},
  before skip=10pt,
  after skip=10pt,
  boxrule=0pt,
  left=10pt,
  right=10pt,
  breakable
}

\tcolorboxenvironment{corollary}{
  enhanced jigsaw,
  sharp corners,
  colframe=theoremblue,
  colback=Backgroundblue!2,
  coltitle=black,
  fonttitle=\bfseries,
  borderline west={2pt}{0pt}{theoremblue},
  before skip=10pt,
  after skip=10pt,
  boxrule=0pt,
  left=10pt,
  right=10pt,
  breakable
}

\tcolorboxenvironment{proposition}{
  enhanced jigsaw,
  sharp corners,
  colframe=theoremblue,
  colback=Backgroundblue!2,
  coltitle=black,
  fonttitle=\bfseries,
  borderline west={2pt}{0pt}{theoremblue},
  before skip=10pt,
  after skip=10pt,
  boxrule=0pt,
  left=10pt,
  right=10pt,
  breakable
}

\tcolorboxenvironment{condition}{
  enhanced jigsaw,
  sharp corners,
  colframe=theoremblue,
  colback=theoremblue!2,
  coltitle=black,
  fonttitle=\bfseries,
  borderline west={2pt}{0pt}{theoremblue},
  before skip=10pt,
  after skip=10pt,
  boxrule=0pt,
  left=10pt,
  right=10pt,
  breakable
}

\tcolorboxenvironment{example}{
  enhanced jigsaw,
  sharp corners,
  colframe=examplegreen,
  colback=examplegreen!2,
  coltitle=black,
  fonttitle=\bfseries,
  borderline west={2pt}{0pt}{examplegreen},
  before skip=10pt,
  after skip=10pt,
  boxrule=0pt,
  left=10pt,
  right=10pt,
  breakable
}

\tcolorboxenvironment{lemma}{
  enhanced jigsaw,
  sharp corners,
  colframe=theoremblue,
  colback=Backgroundblue!2,
  coltitle=black,
  fonttitle=\bfseries,
  borderline west={2pt}{0pt}{theoremblue},
  before skip=10pt,
  after skip=10pt,
  boxrule=0pt,
  left=10pt,
  right=10pt,
  breakable
}

\tcolorboxenvironment{assumption}{
  enhanced jigsaw,
  sharp corners,
  colframe=assumptionred,
  colback=assumptionred!2,
  coltitle=black,
  fonttitle=\bfseries,
  borderline west={2pt}{0pt}{assumptionred},
  before skip=10pt,
  after skip=10pt,
  boxrule=0pt,
  left=10pt,
  right=10pt,
  breakable
}

\tcolorboxenvironment{remark}{
  enhanced jigsaw,
  sharp corners,
  colframe=remarkorange,
  colback=remarkorange!2,
  coltitle=black,
  fonttitle=\bfseries,
  borderline west={2pt}{0pt}{remarkorange},
  before skip=10pt,
  after skip=10pt,
  boxrule=0pt,
  left=10pt,
  right=10pt,
  breakable
}

\newtcolorbox{figurebox}{
  enhanced jigsaw,
  sharp corners,
  colframe=remarkorange,
  colback=remarkorange!2,
  before skip=10pt,
  after skip=10pt,
  boxrule=1pt,
  breakable,
  before skip=10pt,
  after skip=10pt,
}

\newtcolorbox{myproof}{
  enhanced jigsaw,
  sharp corners,
  colframe=schemableu,
  colback=schemableu!0,
  borderline west={1pt}{0pt}{schemableu},
  borderline south={1pt}{0pt}{schemableu},
  before skip=10pt,
  after skip=10pt,
  boxrule=0pt,
  left=10pt,
  right=10pt,
  breakable,
}


%% file: propandtheorem.tex
\newcommand\lemphiequalsB{
    Let $|\cdot|$, $\exp$ and $\log$ be the entry-wise application of the associated function, and let $S\in\R(\dimPathL, \dimPathL)$ be the diagonal matrix whose diagonal coefficients are the signs of $\PathL(\theta)$, then $\mu$ almost surely{\hfill\color{gray}Proof:\ref{proof:phiequalsB}}
    \begin{align}
    \PathL(\theta) = S\exp\big(\Skeleton\log(|\theta|)\big)\quad \in\R(\dimPathL)\;.\label{eq:PathLequalsSK}
    \end{align}
}

\newcommand{\propPathLJacobian}{
    With $\diag$ the operator that maps a vector $u$ to the diagonal matrix with diagonal entries given by $u$, then $\mu$ almost surely {\hfill\color{gray}Proof:\ref{proof:PathLJacobian}}
    \begin{align}
        \partial_\theta\PathL &= \diag(\PathL)\Skeleton\diag(\frac{1}{\theta})
    \end{align}
    where the dependence of $\partial_\theta\PathL$ and $\PathL$ in $\theta$ has been omitted for clarity and $\frac{1}{\theta}\in\R(\dimTheta)$ is the vector whose $i^{th}$ coordinate is equal to $\frac{1}{\theta_i}$.  
}

\newcommand{\corskeletonequalsJacone}{
    Let $\dimTheta$ be the dimension of the parameter $\theta$ and $\mathbf{1}_\dimTheta\in\R(\dimTheta)$ be the vector full of ones, then
    \begin{align}
        \Skeleton = \partial_\theta\PathL(\mathbf{1}_\dimTheta)\;.
    \end{align}
}

\newcommand{\corrkPathLSkeleton}{
     $\mu$ almost surely, the rank of the jacobians of $\PathL$ equals to the rank of its skeleton matrix, i.e. $\mu$ almost surely  {\hfill\color{gray}Proof:\ref{proof:rkPathLSkeleton}}
     \begin{align}
         \rk(\partial_\theta \PathL) = \rk(\Skeleton)\;.
     \end{align}
}

\newcommand{\thrkSkeleton}{
    Note $\#$ the cardinality operator, the rank of the skeleton matrix of $\NN$ is {\hfill\color{gray}Proof:\ref{proof:rkSkeleton}}
    \begin{align}
    \rk(\Skeleton) = \#E - \#H\;.    
    \end{align}   
}

\newcommand{\proprkdphiequalsrkA}{
     Let $\dimTheta$ be the dimension of the parameter of $\NN$ and $h$ the number of hidden nodes in $\NN$ then, $\mu$ almost surely
     \begin{align}
         \rk(\partial_\theta \PathL) = \dimTheta - h \;.
     \end{align}
}

\newcommand{\lemBminusBHzero}{
    For a DAG ReLU network with no hidden nodes, for all $l\geq2$, for all path $\pathn$ of length $l$, let $\edge_1, \ldots \edge_l$ be the ordered list of edges crossed by path $\pathn$, then it exists a path $\tilde{\pathn}$ of size $l-1$ that goes through the list of edges $\edge_1, \ldots, \edge_{l-1}$, as a consequence{\hfill\color{gray}Proof:\ref{proof:lemH0}}
        \begin{align}
            \Skeleton[\pathn, :] - \Skeleton[\tilde{\pathn}, :] &= \begin{matrix}
            \begin{pmatrix} 0 & \cdot& 0 & 1 & 0 &\cdot & 0\end{pmatrix}
            \end{matrix}\label{eq:symH0SK}
        \end{align} where the only non-null element is at the index associated with the edge $\edge_l$, the last edge crossed by path $\pathn$.
}

\newcommand{\lemppcoincideSK}{
    Consider a graph whose node indices are ordered with a topological sort, and let $k$ an output node such that for all $k' > k$ node $k'$ is an output node, then, for any $\pathn$ of length $l\geq 2$ if $p$ crosses node $k$ and not ending at $k$ 
    it exists a path $\tilde{p}$ of size $l-1$ that coincide with $p$ on its $l-1$ first nodes. As a consequence :{\hfill\color{gray}Proof:\ref{proof:ppcoincide}}
     \begin{align}
        \tilde{\Skeleton}[\pathn, :] - \tilde{\Skeleton}[\tilde{\pathn}, :] &=\begin{matrix}
            \begin{pmatrix} 0 & \cdot& 0 & 1 & 0 &\cdot & 0\end{pmatrix}
            \end{matrix}\label{eq:symcoincideSK2}
    \end{align} where the only non-null element corresponds to the last edge crossed by path $\pathn$.
}
\newcommand{\PhiLinearizesf}{
For a DAG ReLU network with parameters $\theta\in\R(\dimTheta)$ and prediction function $f_\theta$, let $\{x_i\}_{i=1}^n\in\R(\dimx)^{\otimes n}$ be a dataset and $\mu$ a measure on  $\big(\theta,\{x_i\}_{i=1}^n\big)$ that is absolutely continuous w.r.t. the Lebesgue measure, then $\mu-$ almost surely, there exists $O$ a neighborhood of $\theta$ and $\mathscr{L}: \R(\dimPathL)\mapsto\R(\dimy)^{\otimes n}$ a linear application such that {\hfill\color{gray}Proof:\ref{proof:PhiLinearizesf}}
\begin{align}
    \forall \theta'\in O \quad \{f_{\theta'}(x_i)\}_{i=1}^n =  \mathscr{L}\big(\PathL(\theta')\big) \;.
\end{align} 
}

%% file: Introduction.tex
\section{Introduction}
Most network architectures can be represented by Directed Acyclic Graph (DAG) ReLU networks, a generalized version of the feed-forward ReLU networks. The output of such a network is computed through the computation of its DAG, where each node is the application of the ReLU or the identity function, and each edge is a linear application whose coefficient is given by a parameter of the network.
Compared to arbitrary computational graphs, DAG ReLU Networks are rather simple to study.
This is partly due to the piecewise linearity of those network activation functions, which simplifies the network prediction and has permitted proofs on its approximation properties \cite{ApproxTheorem2Layers,RepresenterTheorem,ApproxTheoremResNet,MLPGenrelizarion}, its generalization and optimization properties \cite{NTK,LeasyTraining}, on its Lipschitz constant \cite{LipchitzBoundwithPath}, and on its intrinsic dynamics \cite{IntrinsicTrainingDynamics}.

Among the different tools that have brought to light the theoretical properties of DAG ReLU networks, we remark the pathlifting function $\PathL$, a function mapping the network parameters to their vector of paths.
The pathlifting is indeed a well-suited tool for the theoretical study of DAG ReLU networks because it locally linearises the network's prediction function and is agnostic to the non-negative homogeneity of the ReLU activation. 
In particular, the pathlifting function has been used to establish generalization bounds \cite{toolkit}, study the intrinsic dynamics of its pathlifting kernel matrix \cite{IntrinsicTrainingDynamics}, accelerate the network training \cite{PathCond} and penalize the training criterion \cite{RepresenterTheorem,pathSGD}.
Although the pathlifting function has been used in theory, it is rarely used in practice because its is a very high-dimensional vector. 
Generally, only some of its attributes are computed, for example, its dimension and its norm that can be computed in one forward pass \cite{PathLiftingPhD,toolkit}, and the diagonal coefficient of its pathlifting kernel matrix $\trans{\partial_\theta\PathL}{\partial_\theta\PathL}$ that can be computed in one forward-backward pass \cite{PathCond}.

The focus of this paper is the Jacobian of the pathlifting function, $\partial_\theta \PathL$, which has already been studied in  \cite{IntrinsicTrainingDynamics} through the dynamics of the pathlifting kernel matrix $\trans{\partial_\theta\PathL}{\partial_\theta\PathL}$. 
An interesting result of this previous work, which we are able to recover in this paper with an different methodology, is that for a DAG ReLU network with $\dimTheta$ parameters and $h$ hidden neurons, the rank of its pathlifting Jacobian $\partial_\theta\PathL$ is equal to $\dimTheta - h$. 
In the work of \cite{IntrinsicTrainingDynamics} this theorem is proved by an equal upper and lower bound on the rank of the Jacobian matrix $\partial_\theta\PathL$, where the upper bound is provided by a result of the paper (Corollary 3.4) and the lower bound (Appendix F) is obtained using a more general theorem from graph theory \cite{circuit}. 
In fact, this double bounds and the use of an external theorem to conclude on the rank of the Jacobian $\partial_\theta\PathL$ makes the proof pretty technical and convoluted.  

In our work, we take a different approach to determine the rank of the pathlifting Jacobian of a DAG ReLU network. We connect this Jacobian to the network's skeleton matrix (a sparse matrix encoding the path of the network) and prove its rank by induction on the network number of hidden neurons. Compared to the work of \cite{IntrinsicTrainingDynamics}, our proof is elementary and mainly relies on a topological sort of the graph and some symmetries of the skeleton matrix. 

To prove the theorem on this Jacobian rank, we use intermediate propositions that rewrite the pathlifting and its Jacobian as a matrix product between the parameters, the pathlifting, and the skeleton matrix, which, on top of permitting to conclude on the rank of the pathlifting Jacobian, 
also provides an efficient way to compute the pathlifting Jacobian without backpropagation where the differentiations over the elements of the pathlifting vector are replaced by a cheaper matrix product.
While some of those intermediate results were also used and proved in \cite{IntrinsicTrainingDynamics}, we provide new proofs based on first-order expansion and a Python implementation available at this \href{https://gitlab.inria.fr/mverbock/pathliftingandskeleton/-/tree/main?ref_type=heads}{GitLab}.

The paper is organized as follows. In the first section, we define the main objects of the paper and state the main propositions and the main theorem. 
In a second section, we provide a sketch of proof for the main theorem. 
In the last section, we quantify experimentally the gain of using the proposed theory to compute the pathlifting Jacobian compared to backpropagation.  

\subsection{General notations and objects}\label{sec:notations}
We note $\R(\dimx)$ the $\dimx$-dimensional real vector space and $\R(\dimx_1, \cdots, \dimx_q)$ the $(\dimx_1, \cdots, \dimx_q)$-dimensional real vector space. 

Let $f : \R(\dimTheta) \mapsto \R(\dimy)$ be a function, when $f$ is differentiable, we note $\partial_x f\in \R(\dimy, \dimTheta)$ the jacobian matrix of $f$ at $x\in \R(\dimTheta)$ and when $f$ is a real-valued function, we note $\nabla_x f\in \R(\dimTheta)$ its gradient vector with the convention that $\nabla_x f = \trans{\left(\partial_x f\right)}$. 

We represent a Directed Acyclic Graph (DAG) $\Graph$ by its list of nodes $\Nodes = \{n_1, \cdots n_\dimNode\}$ and its list of oriented edges $\Edges = \{e_1, \cdots e_\dimTheta\}$ with the property that there is no cycle in the graph.

We define a DAG ReLU network $\NN$, a computational graph whose graph is a DAG where some nodes are designated as input and output nodes. This network is also characterized by its parameters $\theta\in\R(\dimTheta)$ and its prediction function noted $f_\theta$. 

For a given DAG ReLU network $\NN$, let $\dimPathL$ be the number of paths connecting one input to one output node in its DAG, we note $\PathL(\theta)\in\R(\dimPathL)$ the vector whose value at index $p\in \{1, \cdots, \dimPathL\}$ is the product of the network weights along the path $p$.

The dimensions of the main objects are summarized in the following table.
\begin{table}[h!]
    \begin{center}
        \caption{Objects dimensions.}
        \begin{tabular}{|c|c|l|}
            \hline
            Symbol  & class & Description \\
            \hline
            $\dimx, \dimy$& $\mathbb{N}^*$ & number of input and output nodes \\
            $\theta $& $ \R(\dimTheta)$ & Parameters of the network\\
            $\PathL(\theta)$&$\R(\dimPathL)$ & Pathlifting of the network\\
            $\partial_\theta \PathL$&$\R(\dimPathL, \dimTheta)$ & Jacobian of the pathlifting\\
            \hline
        \end{tabular}
        \label{tab:objects_dimensions}
    \end{center}
\end{table}

%% file: Theorems.tex
\section{Properties of the pathlifting and its Jacobian}
In this section, we present the DAG ReLU network, the pathlifting function, and define the skeleton matrix. Then we connect those objects to one another through propositions.
\subsection{Main definitions}
We consider a \textbf{DAG ReLU network} $\NN$ with vector of parameters $\theta\in\R(\dimTheta)$ and with DAG $\Graph$ described by its list of node $\Nodes = \{n_1, \ldots, n_\dimNode\}$ and its list of oriented edges $\Edges = \{e_1, \ldots, e_{\dimTheta}\}$. The indexes of the edges in $\Edges$ are isomorphic to the index of $\theta$ and the prediction function of $\NN$, noted $f_\theta:\R(\dimx)\mapsto\R(\dimy)$,  is defined as the results of the computational graph of $\Graph$ where each node represents the application of the ReLU $\max(0, \cdot)$ or the identity function, and each edge is the linear application whose coefficient is the parameter associated to it.

The definition and the association of the network $\NN$ with its DAG $\Graph$ allow us to distinguish three subsets of its nodes in $\Nodes$: the input nodes $I$, the output nodes $O$, and the hidden nodes $H$. The input nodes are associated with the network's input and can be identified in the graph as the nodes with no incoming edges. The output nodes are associated with the network's outputs and cannot be inferred from the graph representation and
contains every node with no outgoing edge, together with any additional node designated as an output by the architecture. Finally, a node is a hidden node when it is neither an input nor an output node. Remark that those three subsets are a partition of $\Nodes$, i.e 
\begin{align}
    I\cap H = I\cap O = H\cap O = \emptyset &&\text{and} &&I\cup H\cup O = N \;.
\end{align}
\begin{remark}
 With this convention, the network's biases are treated as input nodes, and their input values are set to $1$.
\end{remark}
The \textbf{pathlifting function} $\PathL$ of a network is a function that maps its parameters $\theta$ to its vectors of paths, where a path is a set of edges that connects an input node to an output node in the graph. For a DAG ReLU network with DAG $\Graph$, let $\dimPathL$ be the number of paths connecting one input node to one output node in the graph $\Graph$ then, the pathlifting function $\PathL:\R(\dimTheta)\mapsto\R(\dimPathL)$ is the function whose value at index $j$ for $j\in\{1, \cdots, \dimPathL\}$ equals the product of all the parameters along the path $j$. Strictly speaking, we defined the pathlifting function as follows.
\begin{definition}
 For DAG ReLU network with DAG $\Graph$ with edge list $\Edges$ isomorphic to the index of $\theta$, let $\dimPathL$ be the number of paths in the network then, for $j\in\{1, \ldots, \dimPathL\}$ a path index in the network, let $e_1, \ldots, e_{d_j}$ be the list of edges crossed by the path $j$, then for $\theta\in\R(\dimTheta)$, we defined $\PathL(\theta)$ at index $j$ as :
    \begin{align}
 \PathL(\theta)_j = \prod_{k=1}^{d_j}\theta_{e_k}
    \end{align}
\end{definition}
The main characteristic of the pathlifting function is that, for a finite-dimensional dataset, it locally linearises the network's prediction function. While this proposition is well-known in the literature, we recal this property in the next proposition for the sake of completeness. 
\begin{proposition}\label{prop:PhiLinearizesf}
    \PhiLinearizesf    
\end{proposition}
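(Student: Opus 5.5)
We will use the classical path decomposition of the output of a ReLU network: for each input $x$, the output at an output node is a sum over paths ending there of (input coordinate at the path's start) $\times$ (product of weights along the path) $\times$ (product of the activation indicators of the ReLU hidden nodes crossed). The plan is to show that, $\mu$-almost surely, these activation indicators are locally constant in $\theta'$. The map $\theta'\mapsto \{f_{\theta'}(x_i)\}_{i=1}^n$ then factors through $\PathL(\theta')$ via a fixed linear map $\mathscr{L}$.

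\textbf{Step 1 (path formula).} Fix $\theta$ and $x$, and define for each node $\node$ its pre-activation $z_\node(\theta,x)$ and its post-activation $a_\node = \sigma_\node(z_\node)$, where $\sigma_\node$ is either the identity or the ReLU. For input nodes, set $a_\node = x_\node$, or $1$ for bias nodes. Let $\varepsilon_\node(\theta,x)=\mathbf{1}[z_\node>0]$ if $\node$ is a ReLU node and $\varepsilon_\node=1$ otherwise, so that $a_\node=\varepsilon_\node z_\node$. We prove by induction along a topological order that
\begin{align}
z_\node(\theta,x)=\sum_{\pathn \text{ input}\to \node} x_{\text{start}(\pathn)}\,\Big(\prod_{\edge\in \pathn}\theta_\edge\Big)\prod_{\node'\in \pathn\setminus\{\text{start},\node\}}\varepsilon_{\node'}(\theta,x),
\end{align}
using $z_\node=\sum_{(\node'\to \node)\in\Edges}\theta_{\node'\to\node}\,a_{\node'}$. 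At output nodes (taking the output value to be $z$ or $a$, which only adds one more fixed factor), this gives $f_\theta(x)_k=\sum_{\pathn\to k} c_\pathn(\theta,x)\,\PathL(\theta)_\pathn$, with coefficients $c_\pathn$ built from the $x$ coordinates and the $\varepsilon$'s.

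\textbf{Step 2 (local constancy, the main obstacle).} We must show that, $\mu$-a.s., no ReLU node has $z_\node(\theta,x_i)=0$ for any $i$. Continuity of $z_\node$ in $\theta$ then gives a neighborhood $O$ on which all signs, hence all $\varepsilon_\node(\cdot,x_i)$, are constant.

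Each $z_\node(\cdot,x)$ is piecewise polynomial in $(\theta,x)$: on each region of fixed activation pattern it equals the polynomial in Step 1. For fixed patterns $\varepsilon$, consider the polynomial $Q_{\node,\varepsilon}(\theta,x)=\sum_{\pathn} x_{\text{start}}\prod\theta_\edge\prod\varepsilon$. Its zero set is Lebesgue-null unless $Q_{\node,\varepsilon}\equiv 0$. The degenerate case is when every path into $\node$ crosses an inactive node. In that case $z_\node$ is not merely zero on a null set but identically zero near the point, so the pattern is determined there anyway. This is the delicate point.

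To handle it, we argue that the set $\{z_\node=0\}$ is covered by the finitely many sets $\{Q_{\node,\varepsilon}=0,\ Q\not\equiv0\}$, which are null, together with regions where $z_\node$ vanishes identically in a neighborhood. On the latter regions we redefine $\varepsilon_\node:=0$; this is harmless because all downstream contributions through $\node$ carry the factor $z_\node=0$, so the formula of Step 1 still holds with a constant $\varepsilon$. More cleanly, we proceed inductively in topological order: assume all upstream $\varepsilon$ are locally constant a.s. Then $z_\node$ locally equals a single polynomial $Q$, and either $Q\equiv0$ locally (set $\varepsilon_\node=0$) or $Q$ vanishes on a null set. Taking a finite union over nodes, data points and patterns keeps the exceptional set null, and absolute continuity of $\mu$ transfers this to $\mu$.

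\textbf{Step 3 (linearity).} On $O$, the coefficients $c_\pathn(\theta',x_i)=c_\pathn(\theta,x_i)$ are fixed numbers. Defining $\mathscr{L}(v)=\big(\sum_{\pathn\to k} c_\pathn(\theta,x_i)v_\pathn\big)_{i,k}$ gives a linear map $\R(\dimPathL)\to\R(\dimy)^{\otimes n}$, and Step 1 yields $\{f_{\theta'}(x_i)\}_i=\mathscr{L}(\PathL(\theta'))$ for all $\theta'\in O$.
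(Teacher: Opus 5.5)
Your proof is correct and follows the paper's overall plan. Both expand each output as a sum over paths of $x_{\text{start}}$ times the path product times activation indicators, show that almost surely these indicators are frozen on a neighborhood $O$ of $\theta$, and read $\mathscr{L}$ off the frozen coefficients.

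The arguments differ in the measure-zero step. The paper fixes $\theta$ with nonzero coordinates and regards each pre-activation as piecewise polynomial in $x$. It asserts that the zero set is null, integrates over $\theta$, and concludes that almost surely no pre-activation vanishes at any $x_i$. You instead work jointly in $(\theta,x)$ with the finitely many pattern polynomials $Q_{n,\varepsilon}$, proceed inductively in topological order, and isolate the case $Q_{n,\varepsilon}\equiv 0$.

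That case really occurs. A node without bias whose incoming edges all come from ReLU units has identically zero pre-activation wherever those units are inactive. This happens on a set of positive Lebesgue measure, so the paper's intermediate claim is false in general: it tacitly assumes every polynomial piece is nonzero. Your remark closes exactly this gap: the pattern is still locally constant there, because $z_n\equiv 0$ near $\theta$ and every path through $n$ already carries a vanishing upstream indicator.

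So your route costs an induction over the nodes but gives a complete argument. In a write-up, keep the inductive version of Step~2 and drop the preliminary covering sketch before it.
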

The the function $\mathscr{L}$ is provided in the proof of \cref{prop:PhiLinearizesf} and is encoded in the \href{https://gitlab.inria.fr/mverbock/pathliftingandskeleton/-/tree/main?ref_type=heads}{Python module}.
For more details on the definition properties of the pathlifting function, we refer the reader to Chapter 2 of \cite{PathLiftingPhD}.

The last object we introduce is the \textbf{skeleton matrix} $\Skeleton$ of a network. The skeleton matrix of a DAG ReLU network is a sparse matrix encoding the absolute dependency of the network's paths with respect to its parameters. Strictly speaking, we define the skeleton matrix as follows.
\begin{definition}
 Let $\NN$ be a DAG ReLU network with $\dimTheta$ parameters and with $\dimPathL$ paths, we define $\Skeleton\in\R(\dimPathL, \dimTheta)$, the skeleton matrix of $\NN$ as the matrix whose coefficients are defined as follows.
    \begin{align}
 \Skeleton_{i j} =
         \begin{cases}
          1 \quad\text{if the $i^{th}$ path of $\NN$ is dependent of the parameters $\theta_j$}  \\
          0 \quad \text{otherwise}
         \end{cases}
    \end{align} 
\end{definition}
To finish this subsection, we provide in \cref{fig:DAG_intro} an example of a DAG ReLU network with its pathlifting function and its skeleton matrix.
\begin{figurebox}
    \begin{minipage}{0.44\textwidth}
            \centering
            \includegraphics[width=0.9\textwidth]{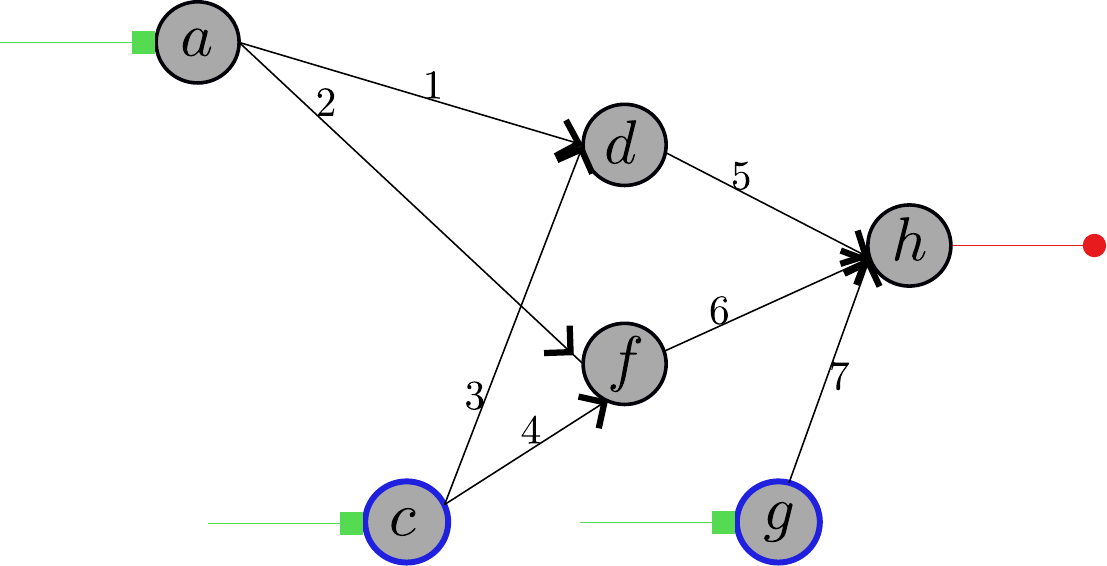}
    \end{minipage}
    \begin{minipage}{0.55\textwidth}
        \begin{align*}
        \PathL(\theta) &= \begin{pmatrix}
            \theta_1\theta_5\\
            \theta_2\theta_6\\
            \theta_3\theta_5\\
            \theta_4\theta_6\\
            \theta_7
        \end{pmatrix}
        &&\Skeleton = \begin{pmatrix}
            1 & 0 & 0 & 0 & 1 & 0 & 0\\
            0 & 1 & 0 & 0 & 0 & 1 & 0\\
            0 & 0 & 1 & 0 & 1 & 0 & 0\\
            0 & 0 & 0 & 1 & 0 & 1 & 0\\
            0 & 0 & 0 & 0 & 0 & 0 & 1
        \end{pmatrix} 
    \end{align*}
    \end{minipage}
    \captionof{figure}{left: DAG  of a feed-forward ReLU network with bias and one hidden layer. The nodes are represented with circles and the edges with black oriented arrows. The green (resp. red) arrows indicate the input (resp. the output) nodes, and the bias nodes are circled in dark blue. The nodes are indexed with letters, while the edges are indexed with integers. The node sets are: $I = \{a, c, g\}, O = \{h\}, H = \{d, f\}$. Middle: pathlifting function $\PathL$ of the network whose DAG is the left figure. Right: skeleton matrix $\Skeleton$ associated to $\PathL$ of the middle figure.}
    \label{fig:DAG_intro}
\end{figurebox}

\subsection{Propositions and theorem}
We now present propositions that connect the pathlifting function, its Jacobian, and the skeleton matrix, and then present the main theorem on the rank of the pathlifting Jacobian. We note to the reader that some of the propositions were already known in the literature (\cite{IntrinsicTrainingDynamics}), in particular \cref{prop:PathLJacobian,th:rkSkeleton}. However, we provide a new proof of \cref{prop:PathLJacobian} using a first-order Taylor expansion, and a new proof of \cref{th:rkSkeleton} by induction.  

Let $\mu$ be a measure on $\theta$ that is absolutely continuous with respect to the Lebesgue measure, the following equality holds. 
\begin{lemma}\label{lem:phiequalsB}
    \lemphiequalsB
\end{lemma}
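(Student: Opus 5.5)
The plan is to prove the identity coordinate by coordinate, after first discarding a $\mu$-null set of parameters. The set $Z=\bigcup_{i=1}^{\dimTheta}\{\theta\in\R(\dimTheta):\theta_i=0\}$ is a finite union of hyperplanes, so it has Lebesgue measure zero. Since $\mu$ is absolutely continuous with respect to the Lebesgue measure, $\mu(Z)=0$. From now on we fix $\theta\notin Z$. Then every coordinate of $|\theta|$ is strictly positive, so $\log(|\theta|)\in\R(\dimTheta)$ is well defined. Moreover each $\PathL(\theta)_j$ is a product of nonzero reals, hence nonzero, so its sign lies in $\{-1,+1\}$.

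Next, fix a path index $j\in\{1,\ldots,\dimPathL\}$ and let $\edge_1,\ldots,\edge_{d_j}$ be the edges crossed by path $j$. A DAG path never repeats an edge, so these edges are distinct. By the definition of the skeleton matrix, $\Skeleton_{jk}=1$ exactly when $k\in\{\edge_1,\ldots,\edge_{d_j}\}$, and $\Skeleton_{jk}=0$ otherwise. Hence
\begin{align*}
\big(\Skeleton\log(|\theta|)\big)_j=\sum_{k=1}^{\dimTheta}\Skeleton_{jk}\log|\theta_k|=\sum_{l=1}^{d_j}\log|\theta_{\edge_l}|=\log\Big|\prod_{l=1}^{d_j}\theta_{\edge_l}\Big|=\log|\PathL(\theta)_j| .
\end{align*}
Applying $\exp$ entrywise gives $|\PathL(\theta)_j|$. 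Multiplying by the $j$-th diagonal entry of $S$, namely $\sign(\PathL(\theta)_j)$, gives $\sign(\PathL(\theta)_j)\,|\PathL(\theta)_j|=\PathL(\theta)_j$. This holds for every $j$, which yields \eqref{eq:PathLequalsSK} for all $\theta\notin Z$, and therefore $\mu$ almost surely.

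The only delicate step is the identification of the row $\Skeleton[j,:]$ with the indicator vector of the edge set of path $j$. The phrase ``dependent of the parameter $\theta_j$'' in the definition must be read as ``$\theta_j$ appears as a factor in the product defining the path''. With this reading, and because edges are not repeated along a path, the entries of $\Skeleton$ are $0/1$ and not multiplicities. Everything else is an entrywise computation.
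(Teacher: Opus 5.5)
Your proposal is correct and follows essentially the same route as the paper: discard the Lebesgue-null set where some $\theta_i=0$, then verify the identity coordinatewise via $\big(\Skeleton\log(|\theta|)\big)_j=\sum_k \Skeleton_{jk}\log|\theta_k|=\log|\PathL(\theta)_j|$, exponentiate, and multiply by $\sign(\PathL(\theta)_j)$. Your remark that a DAG path never repeats an edge, so row $j$ of $\Skeleton$ is exactly the $0/1$ indicator of the edges of path $j$, makes explicit a point the paper uses without stating it.
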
 
This result is a key element that is to be be used to compute the pathlifting's Jacobian using the skeleton matrix.
\begin{proposition}\label{prop:PathLJacobian}
    \propPathLJacobian
\end{proposition}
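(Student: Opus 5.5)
The plan is to differentiate the closed form of \cref{lem:phiequalsB}. Let $U$ be the set of parameters with no zero coordinate; its complement is a finite union of hyperplanes, hence $\mu$-negligible because $\mu$ is absolutely continuous. On $U$ every path product $\PathL(\theta)_j=\prod_{k}\theta_{e_k}$ is nonzero. The map $\theta\mapsto\log|\theta|$ is smooth on $U$, and the sign matrix $S$ is locally constant there, since each sign of $\PathL(\theta)_j$ is a product of signs of coordinates of $\theta$ and these do not change in a neighbourhood of $\theta$. So on a neighbourhood of any $\theta\in U$, \cref{lem:phiequalsB} gives $\PathL(\theta')=S\exp(\Skeleton\log|\theta'|)$ with $S$ fixed, and both sides are smooth.

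One could instead differentiate the product formula directly. For path $j$ and edge $i$, $\partial_{\theta_i}\PathL(\theta)_j=\Skeleton_{ji}\prod_{k\neq i}\theta_{e_k}=\Skeleton_{ji}\PathL(\theta)_j/\theta_i$. This uses the fact that in a DAG a path crosses each edge at most once, so $\PathL_j$ is affine in each $\theta_i$ and $\Skeleton_{ji}\in\{0,1\}$ records whether it depends on $\theta_i$ at all. In matrix form this is exactly $\diag(\PathL)\Skeleton\diag(1/\theta)$, and it holds on $U$.

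Since the paper advertises a first-order expansion proof, I would present the argument that way. Fix $\theta\in U$ and a small $\delta$. Then $\log|\theta+\delta|=\log|\theta|+\diag(1/\theta)\delta+o(\delta)$, using $\frac{d}{dx}\log|x|=1/x$ for $x\neq0$. Multiplying by $\Skeleton$ gives $\Skeleton\log|\theta|+\Skeleton\diag(1/\theta)\delta+o(\delta)$. Applying the entrywise exponential gives $\exp(\Skeleton\log|\theta|)\odot(1+\Skeleton\diag(1/\theta)\delta)+o(\delta)$, where $\odot$ is the entrywise product. Multiplying by $S$ and using $S\exp(\Skeleton\log|\theta|)=\PathL(\theta)$ yields
\begin{align*}
\PathL(\theta+\delta)=\PathL(\theta)+\diag(\PathL(\theta))\Skeleton\diag(\tfrac{1}{\theta})\delta+o(\delta).
\end{align*}
Reading off the linear term gives the claimed Jacobian on $U$, and therefore $\mu$ almost surely.

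The main obstacle is justifying that $S$ is locally constant, so that it contributes no derivative term, and that we may stay away from the zero set. Both follow from restricting to $U$, which has full $\mu$-measure. The remaining steps are routine chain-rule bookkeeping.
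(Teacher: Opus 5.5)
Your proof is correct and follows essentially the same route as the paper: a first-order expansion of $S\exp(\Skeleton\log|\theta|)$ from \cref{lem:phiequalsB} around a point with no null coordinate, followed by reading off the linear term. You also state explicitly that $S$ is locally constant on the full-measure set $U$, which the paper uses only implicitly. Your use of $\frac{d}{dx}\log|x|=1/x$ avoids the paper's somewhat loose removal of the absolute value inside the logarithm.
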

In fact, for both results, the $\mu$ almost surely statement excludes the ill-condition cases where $\theta$ has a null coordinate.

We remark that \cref{prop:PathLJacobian} provides an expression of the pathlifting Jacobian as a function of $\PathL$ and $\theta$, and whose total computational complexity is $O(\dimTheta\dimPathL)$ as each element $(i, j)$ of $\partial_\theta\PathL$ can be computed independently with $\PathL_iB_{i, j}\frac{1}{\theta_j}$. In fact, this asymptotic computational cost is similar to the one obtained when backpropagating each element of $\PathL$; however, as shown in the experimental section of this paper, the matrix product in \cref{prop:PathLJacobian} is naturally parallelizable and is super-efficient in practice compare to usual backpropagation.


From \cref{prop:PathLJacobian}, one can deduce that the skeleton matrix $\Skeleton \in\R(\dimPathL, \dimTheta)$ is equals to the jacobian of $\PathL$ at point $\theta = \mathbf{1}_\dimTheta\in\R(\dimTheta)$ the vector full of ones. This property was also used in \cite{IntrinsicTrainingDynamics} and is stated in the following corollary.
\begin{corollary}
    \corskeletonequalsJacone
\end{corollary}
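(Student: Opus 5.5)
The plan is to evaluate \cref{prop:PathLJacobian} at the point $\theta=\mathbf{1}_\dimTheta$, with one caveat. \cref{prop:PathLJacobian} only holds $\mu$ almost surely, and a single point has $\mu$-measure zero. I will therefore not quote the almost-sure statement at that point. Instead I will rederive the Jacobian entries directly from the monomial definition of $\PathL$. This is valid for every $\theta$, because each coordinate of $\PathL$ is a polynomial in $\theta$ and hence everywhere differentiable.

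The computation goes as follows. Fix a path index $i$ and a parameter index $j$. By the definition of the pathlifting, $\PathL(\theta)_i=\prod_{k=1}^{d_i}\theta_{e_k}$, and the edges $e_1,\ldots,e_{d_i}$ are pairwise distinct because the graph is acyclic, so a path never reuses an edge. There are two cases.
\begin{itemize}
\item If $\theta_j$ does not appear in the product, that is $\Skeleton_{ij}=0$, then $\partial_{\theta_j}\PathL_i\equiv 0$.
\item If $\theta_j$ appears, that is $\Skeleton_{ij}=1$, it appears exactly once with exponent one. Hence $\partial_{\theta_j}\PathL_i(\theta)=\prod_{k:\,e_k\neq j}\theta_{e_k}$.
\end{itemize}
In compact form, $\partial_{\theta_j}\PathL_i(\theta)=\Skeleton_{ij}\prod_{k:\,e_k\neq j}\theta_{e_k}$ holds for all $\theta\in\R(\dimTheta)$. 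Setting $\theta=\mathbf{1}_\dimTheta$, the product is an empty product or a product of ones, so it equals $1$. This gives $\partial_{\theta_j}\PathL_i(\mathbf{1}_\dimTheta)=\Skeleton_{ij}$ entrywise, which is the claim.

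As a consistency check, the formula $\diag(\PathL)\Skeleton\diag(\frac{1}{\theta})$ of \cref{prop:PathLJacobian} reduces to $\Skeleton$ at $\mathbf{1}_\dimTheta$, since there $\PathL(\mathbf{1}_\dimTheta)=\mathbf{1}_\dimPathL$ and $\frac{1}{\theta}=\mathbf{1}_\dimTheta$. One could also argue by continuity: $\theta\mapsto\partial_\theta\PathL$ is polynomial, hence continuous, and the identity of \cref{prop:PathLJacobian} holds on the dense set where all coordinates of $\theta$ are nonzero. This set contains $\mathbf{1}_\dimTheta$ itself, so the identity extends there.

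The only real obstacle is the measure-zero issue just described. It is resolved either by the direct entrywise computation or by the density and continuity argument. The fact that each edge appears at most once in a path must be stated explicitly, since otherwise a derivative could pick up an exponent factor larger than one.
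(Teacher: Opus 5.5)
Your proof is correct, but its main line differs from the paper's. The paper obtains the corollary in one line by substituting $\theta=\mathbf{1}_\dimTheta$ into \cref{prop:PathLJacobian}. There $\PathL(\mathbf{1}_\dimTheta)=\mathbf{1}_\dimPathL$ and $\frac{1}{\theta}=\mathbf{1}_\dimTheta$, so both diagonal factors are identities; this is exactly your ``consistency check''. You instead differentiate the multilinear monomials $\PathL(\theta)_i$ directly. This bypasses the $\exp$/$\log$ representation of \cref{lem:phiequalsB} and the Taylor expansion. It is also valid at every $\theta$, including those with null coordinates, where the paper's formula is undefined. In fact it re-proves \cref{prop:PathLJacobian} on the whole open set $\{\theta:\theta_j\neq 0\ \forall j\}$, since there $\prod_{k:\,e_k\neq j}\theta_{e_k}=\PathL(\theta)_i/\theta_j$. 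The paper's route buys brevity once the proposition is available. The exponent-one fact you flag is needed by both routes: the identity $\sum_k \Skeleton_{ik}\log|\theta_k|=\log|\PathL(\theta)_i|$ used in \cref{lem:phiequalsB} relies on it silently. Your measure-zero caution is reasonable given how the proposition is stated, but the paper remarks after \cref{prop:PathLJacobian} that the almost-sure qualifier only excludes $\theta$ with a null coordinate. Its proof is pointwise for every such $\theta$; only $|h_k|<|\theta_k|$ is needed there, not the stray bound $|\theta_k|<1$. So direct substitution at $\mathbf{1}_\dimTheta$ is legitimate. For the same reason your continuity remark is redundant as phrased. If the identity holds on all of $\{\theta:\theta_j\neq0\ \forall j\}$, then $\mathbf{1}_\dimTheta$ is covered outright. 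Continuity would matter only if the identity were known on a full-measure subset alone. In that case the argument is density of that subset plus continuity of both sides on the open set, not density of the set of nonzero-coordinate parameters itself.
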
 
From \cref{prop:PathLJacobian}, one can easily deduce an equality on the rank of the pathlifting Jacobian.
\begin{corollary}\label{cor:rkPathLSkeleton}
     \corrkPathLSkeleton
\end{corollary}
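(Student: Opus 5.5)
The plan is to read the rank off the factorization given by \cref{prop:PathLJacobian}. Almost surely, $\partial_\theta\PathL = \diag(\PathL)\,\Skeleton\,\diag(\frac{1}{\theta})$, and it suffices to show that both diagonal factors are invertible. Multiplying a matrix on the left or on the right by an invertible matrix does not change its rank, so this will give $\rk(\partial_\theta\PathL) = \rk(\Skeleton)$ directly.

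First, I will pin down the null set we discard. Let $Z = \bigcup_{j=1}^{\dimTheta}\{\theta : \theta_j = 0\}$. This is a finite union of hyperplanes, so it has Lebesgue measure zero, and hence $\mu(Z)=0$ because $\mu$ is absolutely continuous. We may also assume that we are outside the $\mu$-null set on which \cref{prop:PathLJacobian} could fail; a finite union of null sets is still null.

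Next, fix $\theta\notin Z$.
\begin{itemize}
\item Every coordinate $\theta_j$ is nonzero, so $\diag(\frac{1}{\theta})$ is well defined and invertible, with inverse $\diag(\theta)$.
\item By the definition of the pathlifting, each coordinate $\PathL(\theta)_i$ is a product of coordinates of $\theta$ along path $i$. Each path contains at least one edge, since it joins an input node to a distinct output node. A product of nonzero reals is nonzero, so every entry of $\PathL(\theta)$ is nonzero and $\diag(\PathL(\theta))$ is invertible.
\end{itemize}
It then follows that $\rk(\partial_\theta\PathL) = \rk(\diag(\PathL)\Skeleton\diag(\frac{1}{\theta})) = \rk(\Skeleton)$.

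There is no real obstacle here. The only point that needs care is the path-length remark: we need every path to have at least one edge, so that $\PathL_i$ is a genuine nonempty product. If input and output sets were allowed to overlap, an empty path would give the empty product $1$, which is still nonzero. So the argument goes through in either case.
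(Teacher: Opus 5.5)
Your proposal is correct and matches the paper's proof. Both use the factorization $\partial_\theta\PathL = \diag(\PathL)\Skeleton\diag(\frac{1}{\theta})$ from \cref{prop:PathLJacobian}, observe that almost surely no coordinate of $\theta$ vanishes so both diagonal factors are invertible, and conclude by rank invariance under invertible multiplication; your justification that each $\PathL(\theta)_i$ is a nonempty product of nonzero coordinates, hence nonzero, spells out a step the paper only asserts.
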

We conclude this section with the main theorem that is provided with a sketch of proof in \cref{sec:sketch}. For $\NN$ be a neural network with input set $I$, output set $O$, hidden node set $H$, and skeleton matrix $\Skeleton$, then the following result holds.
\begin{theorem}\label{th:rkSkeleton}
    \thrkSkeleton
\end{theorem}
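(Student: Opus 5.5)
We prove \cref{th:rkSkeleton} by induction on $h=\#H$, the number of hidden nodes, keeping the edge set fixed. Throughout, the nodes are indexed by a topological sort.

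\textbf{Base case $h=0$.} Every edge goes from an input node directly to an output node. Each edge is then itself a path of length one, and every path of length $l\geq 2$ is excluded. Hence, up to row permutation, $\Skeleton$ contains the identity $I_{\#E}$ and no other rows, so $\rk(\Skeleton)=\#E$.

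More generally, the statement of \cref{lem:phiequalsB}'s companion lemma (the $H=\emptyset$ symmetry, equation \eqref{eq:symH0SK}) lets us argue as follows. Order the paths by length. Each row of length $l\geq 2$ differs from a row of length $l-1$ by a canonical basis vector. Elementary row operations therefore reduce $\Skeleton$ to a matrix whose nonzero rows are canonical basis vectors. Every edge appears as the last edge of some path, so all $\#E$ basis vectors appear, and $\rk(\Skeleton)=\#E$.

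\textbf{Induction step.} Assume the formula holds for all networks with $h-1$ hidden nodes, and let $\NN$ have $h$ hidden nodes. Choose the hidden node $k$ of largest index in the topological sort. Then every node $k'>k$ is an output node, and $k$ is the last hidden node.

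Build $\tilde\NN$ from $\NN$ by declaring $k$ an output node, with the same edges. Then $\tilde\NN$ has $h-1$ hidden nodes, so by hypothesis $\rk(\tilde\Skeleton)=\#E-h+1$. The paths of $\tilde\NN$ are:
\begin{itemize}
\item the paths of $\NN$, which are the paths not ending at $k$; and
\item the new paths ending at $k$, each of which is a prefix (input $\to k$) of the paths of $\NN$ through $k$.
\end{itemize}
Thus $\tilde\Skeleton$ is $\Skeleton$ with extra rows appended. Writing it in the block form of \verb|\OrderSkeleton|, the extra rows are indexed by the paths ending at $k$, and the rows of $\Skeleton$ through $k$ are grouped by their prefix before $k$ (the blocks $A_i$) and their suffix after $k$ (the blocks $M_i$).

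\textbf{Row operations.} Since all nodes after $k$ are outputs, each path through $k$ is a prefix $\pi$ ending at $k$ followed by a suffix $\sigma$ from $k$. The suffixes $\sigma$ use only edges out of $k$ and between output nodes, and they are independent of the prefix. Hence every row through $k$ equals $\mathbf 1_\pi+\mathbf 1_\sigma$.

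Fix one prefix $\pi_0$. For any other prefix $\pi$, subtract the row $\pi_0\sigma$ from each row $\pi\sigma$ (the intermediate skeleton). The difference is $\mathbf 1_\pi-\mathbf 1_{\pi_0}$, which does not depend on $\sigma$. Consequently, the rows of $\Skeleton$ span the span of the following three families:
\begin{itemize}
\item the rows of paths avoiding $k$;
\item $\{\mathbf 1_{\pi_0}+\mathbf 1_\sigma\}_\sigma$;
\item $\{\mathbf 1_\pi-\mathbf 1_{\pi_0}\}_\pi$.
\end{itemize}
The rows of $\tilde\Skeleton$ span this same space together with the single vector $\mathbf 1_{\pi_0}$: once $\mathbf 1_{\pi_0}$ is added, every $\mathbf 1_\pi$ is obtained. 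Using \cref{lem:phiequalsB}'s sibling lemma on coinciding paths (\eqref{eq:symcoincideSK2}) guarantees that each relevant prefix path exists in $\tilde\NN$. Therefore $\rk(\tilde\Skeleton)\le \rk(\Skeleton)+1$.

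\textbf{Main obstacle: strict increase.} It remains to show $\rk(\tilde\Skeleton)=\rk(\Skeleton)+1$, which gives $\rk(\Skeleton)=\#E-h$; this requires $\mathbf 1_{\pi_0}\notin\mathrm{row}(\Skeleton)$. We exhibit a vector $v\in\R(\dimTheta)$ with $\Skeleton v=0$ but $\scalarp{\mathbf 1_{\pi_0},v}\neq0$. Take the rescaling vector of node $k$: set $v_e=1$ on edges entering $k$, $v_e=-1$ on edges leaving $k$, and $v_e=0$ otherwise.

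Every path of $\NN$ through the hidden node $k$ uses exactly one incoming and one outgoing edge of $k$, so it is orthogonal to $v$. Paths avoiding $k$ are trivially orthogonal to $v$. On the other hand, $\scalarp{\mathbf 1_{\pi_0},v}=1$.

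This is precisely the place where "hidden" matters: the argument fails if $k$ is an output, since then paths may end at $k$. Handling the case where $k$ has no incoming paths is excluded because $k$ is not an input. I expect the careful bookkeeping of the row reduction (the $\le$ direction, matching the paper's block matrices) to be the most delicate part.
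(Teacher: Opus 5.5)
Your argument is correct once the first base-case paragraph is removed. That paragraph is false. With $\#H=0$ an edge may join two output nodes (e.g.\ the path $\theta_2\theta_5$ in \eqref{eq:PhiAndSkeletonNoHiddenNodeSketchOfProof}), so paths of length $\geq 2$ exist and $\Skeleton$ is not a permuted identity. This matters, because your induction step produces exactly such networks: in $\tilde\NN$ the output $k$ has outgoing edges. Delete that paragraph and keep the ``more generally'' argument, which is the paper's \cref{lem:H0SK} argument and is correct. Three small fixes:
\begin{itemize}
\item The lemma you attribute to \cref{lem:phiequalsB} is \cref{lem:H0SK}.
\item The appeal to \cref{lem:ppcoincideSK} is unnecessary: prefixes ending at $k$ are paths of $\tilde\NN$ by definition once $k$ is an output.
\item Your $\le$ direction is already complete as written; no matching with the paper's block matrices is needed.
\end{itemize}

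Your induction step takes a genuinely different route from the paper's, although both rest on the same linear relation. Your $k$ plays the role of the paper's $n_u$. The paper deletes the edge $e_{\leftrightarrow}$ from $n_u$ to its first successor. Its column operation (subtract the $E_-^u$ columns from the $e_{\leftrightarrow}$ column, then add the $E_+^u$ columns) is exactly the statement $\Skeleton v=0$ for your rescaling vector $v$. From there the paper needs:
\begin{itemize}
\item the specific choice of $n_u$ and $n_k$ in the topological sort;
\item \cref{lem:UniqueCompositionP,lem:inEkNotinEh,lem:ppcoincideSK};
\item a row reduction that strips the edges of $E^{end}$;
\item an identification of the remaining block with the skeleton matrix of a smaller network with $\dimTheta-1-\#E^{end}$ parameters, up to null or repeated rows and nodes re-classified as inputs.
\end{itemize}

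You instead keep the graph and only re-designate $k$ as an output. Then your $\tilde\Skeleton$ is $\Skeleton$ plus the prefix rows $\mathbf 1_\pi$, and $\operatorname{row}(\tilde\Skeleton)=\operatorname{row}(\Skeleton)\oplus\operatorname{span}(\mathbf 1_{\pi_0})$ follows from two one-line facts. The upper bound is $\mathbf 1_{\pi\sigma}-\mathbf 1_{\pi_0\sigma}=\mathbf 1_\pi-\mathbf 1_{\pi_0}$. The lower bound is $\Skeleton v=0$ together with $\langle \mathbf 1_{\pi_0},v\rangle=1$.

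This gives a shorter and more robust step: there is no edge deletion and no need to identify a reduced matrix with a skeleton matrix. It also makes the topological sort superfluous, because both bounds hold for any hidden node $k$. The prefix/suffix product structure and the disjointness of their edge sets use only acyclicity. The existence of $\pi_0$ and of at least one suffix $\sigma$ uses only that $k$ is neither an input nor an output. In exchange, the paper's route gives an explicit sequence of elementary operations on a shrinking graph, in the spirit of the algorithms in its sketch.
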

\Cref{cor:rkPathLSkeleton,th:rkSkeleton} leads to the following corollary, which concludes the section. Let $\NN$ be a neural network with parameter $\theta$, the following result holds. 
\begin{corollary}\label{prop:rkdphi=rkA}
     \proprkdphiequalsrkA
\end{corollary}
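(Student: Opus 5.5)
This corollary follows by chaining \cref{cor:rkPathLSkeleton} with \cref{th:rkSkeleton}, plus one identification. By \cref{cor:rkPathLSkeleton}, $\mu$ almost surely $\rk(\partial_\theta\PathL) = \rk(\Skeleton)$. This holds off the $\mu$-null set where some $\theta_j = 0$ or some $\PathL(\theta)_i = 0$. On the complement, \cref{prop:PathLJacobian} gives $\partial_\theta\PathL = \diag(\PathL)\Skeleton\diag(\frac{1}{\theta})$. Both diagonal factors are then invertible, so multiplying by them does not change the rank.

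Next I invoke \cref{th:rkSkeleton}, which holds deterministically since $\Skeleton$ depends only on the graph: $\rk(\Skeleton) = \#E - \#H$. It remains to match these cardinalities with the quantities in the statement. By construction the edge list $\Edges = \{e_1,\ldots,e_\dimTheta\}$ is in bijection with the coordinates of $\theta$, so $\#E = \dimTheta$. The number of hidden nodes is $h = \#H$ by definition. Substituting gives $\rk(\partial_\theta\PathL) = \dimTheta - h$ $\mu$ almost surely.

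The only point needing care is the almost-sure qualifier. The exceptional set is contained in the finite union of hyperplanes $\{\theta_j = 0\}$: each $\PathL(\theta)_i$ is a product of coordinates of $\theta$, so it can vanish only if some $\theta_j$ does. This union has Lebesgue measure zero, hence $\mu$-measure zero by absolute continuity. I expect no real obstacle here; all the substantive difficulty lies in \cref{th:rkSkeleton}, which I take as given.
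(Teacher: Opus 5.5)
Your proposal is correct and follows the paper's route: the paper obtains this corollary by combining \cref{cor:rkPathLSkeleton} ($\rk(\partial_\theta\PathL)=\rk(\Skeleton)$ almost surely) with \cref{th:rkSkeleton} ($\rk(\Skeleton)=\#E-\#H$), and then identifying $\#E=\dimTheta$ and $\#H=h$. Your justification of the almost-sure qualifier is a welcome explicit addition that the paper leaves implicit: the exceptional set lies in the finite union of the hyperplanes $\{\theta_j=0\}$, because each $\PathL(\theta)_i$ is a product of coordinates of $\theta$.
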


%% file: SketchOfProof.tex
\section{Sketch of proof}\label{sec:sketch}
In this section, we provide a sketch of proof for \cref{th:rkSkeleton}. To mimic the induction of the general proof, we chose two simple DAG ReLU networks on which we apply the general proof.
The first network is chosen with no hidden nodes to illustrate the initialization of the induction, and the second network is chosen with one hidden node to illustrate one step of the induction.
\subsection{Initialization of the induction}
We consider a DAG ReLU network with no hidden nodes, $h=0$, and whose DAG is given in \cref{fig:NoHiddenNodeSketchOfProof}. Its pathlifting function $\PathL$ and skeleton matrix $\Skeleton$ are defined in  \cref{eq:PhiAndSkeletonNoHiddenNodeSketchOfProof}. The columns of the skeleton matrix $\Skeleton$ are ordered according to the list of edges indicated at its top, and its rows are ordered to match the pathlifting function indexing. 
\begin{figurebox}
    \begin{minipage}{0.42\textwidth}
        \centering
        \includegraphics[width = 0.9\textwidth]{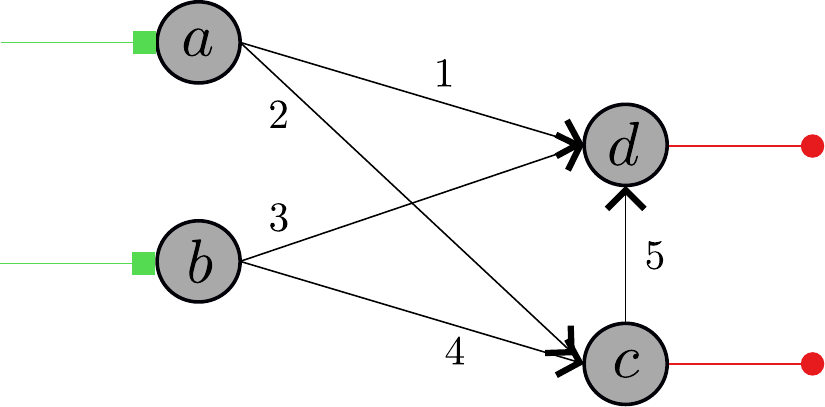}
        \captionof{figure}{DAG of a network with no hidden node, the green and the red arrows represent the input nodes and the output nodes, respectively. The nodes are indexed with letters $\Nodes = \{a, b, c, d\}$ and the edges with integers $\Edges = \{1, 2, 3, 4, 5\}$. 
    }
        \label{fig:NoHiddenNodeSketchOfProof}
    \end{minipage}
    \hfill
    \begin{minipage}{0.57\textwidth}
        \begin{align}
            \PathL(\theta) = 
            \begin{matrix}
                \begin{matrix}
                    \\
                \end{matrix}\\
                \begin{pmatrix}
                    \theta_1 \\
                    \theta_2 \\
                    \theta_2\theta_5\\
                    \theta_3\\
                    \theta_4\\
                    \theta_4\theta_5    
                \end{pmatrix}
            \end{matrix},\;
            \Skeleton = 
            \begin{matrix}
                    1 \quad 2 \quad 3 \quad 4 \quad 5\\
                \begin{pmatrix}
                    1 & 0 & 0 & 0 & 0\\
                    0 & 1 & 0 & 0 & 0\\
                    0 & 1 & 0 & 0 & 1\\
                    0 & 0 & 1 & 0 & 0\\
                    0 & 0 & 0 & 1 & 0\\
                    0 & 0 & 0 & 1 & 1    
                \end{pmatrix} 
            \end{matrix}\label{eq:PhiAndSkeletonNoHiddenNodeSketchOfProof}
        \end{align}
    \end{minipage}\\
\end{figurebox}

To compute the rank of $\Skeleton$, we apply invertible row operations on it, then evaluate the rank of the resulting matrix. Indeed, performing row operations on a matrix is equivalent to applying invertible matrices to its left side; thus, the rank is not changed.

We choose the row operations that take advantage of some specific properties of the skeleton matrix $\Skeleton$ and which are embodied by the following lemma.
\begin{lemma}\label{lem:H0SK}
        \lemBminusBHzero
\end{lemma}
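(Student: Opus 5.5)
The argument is a direct structural one, using only the partition $I\sqcup H\sqcup O=\Nodes$ and the definition of $\Skeleton$. Take a path $\pathn$ of length $l\geq 2$ with ordered edges $\edge_1,\ldots,\edge_l$ and visited nodes $\node_0,\node_1,\ldots,\node_l$. Here $\node_0\in I$, $\node_l\in O$, and $\edge_k$ goes from $\node_{k-1}$ to $\node_k$.

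\textbf{Step 1: truncate the path.} I look at the node $\node_{l-1}$. It has an incoming edge $\edge_{l-1}$, so it has at least one incoming edge and is therefore not an input node. By hypothesis the network has no hidden nodes, i.e. $H=\emptyset$, and since $I,H,O$ partition $\Nodes$, we get $\node_{l-1}\in O$. The sequence $\edge_1,\ldots,\edge_{l-1}$ starts at the input node $\node_0$ and ends at the output node $\node_{l-1}$. It is therefore a path connecting an input to an output, and it indexes some row $\tilde{\pathn}$ of $\Skeleton$ of length $l-1$. This is where the absence of hidden nodes is used.

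\textbf{Step 2: compare the rows.} By definition, the row $\Skeleton[\pathn,:]$ is the indicator vector of the edge set $\{\edge_1,\ldots,\edge_l\}$, and $\Skeleton[\tilde{\pathn},:]$ is the indicator of $\{\edge_1,\ldots,\edge_{l-1}\}$. Because the graph is a DAG, a path never repeats a node and hence never repeats an edge. So $\edge_l\notin\{\edge_1,\ldots,\edge_{l-1}\}$, and the two edge sets differ exactly by $\edge_l$. Subtracting the indicators gives the vector with a single $1$ at the column of $\edge_l$, which is \eqref{eq:symH0SK}.

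\textbf{Possible obstacle.} The only delicate point is making sure that $\tilde{\pathn}$ really is a row of $\Skeleton$. This requires that the paths indexing $\PathL$ are exactly all input-to-output edge sequences, including those ending at an output node that still has outgoing edges, such as node $b$ in \cref{fig:NoHiddenNodeSketchOfProof}. This is consistent with the definition of $\dimPathL$ as the number of paths connecting one input node to one output node. I would state it explicitly, then check the lemma against the example: $\theta_2\theta_5$ minus $\theta_2$ gives the indicator of edge $5$.
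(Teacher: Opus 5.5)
Your proof is correct and follows essentially the same route as the paper: because $n_{l-1}$ has an incoming edge it is not an input, and since $H=\emptyset$ it must be an output, so dropping $e_l$ gives a genuine input-to-output path. You also justify the single nonzero entry by noting that a path in a DAG never repeats an edge, a step the paper leaves implicit.
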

This lemma can be easily checked on the \cref{fig:NoHiddenNodeSketchOfProof}: the only paths of length greater than one are the third and sixth path of $\PathL$, they respectively go across the list of edges $2, 5 $ and $4, 5$ and indeed the second path of $\PathL$ is of length one and goes through $2$ and the fifth path of $\PathL$ is of length one and goes through $4$.  

Using this symmetry, we construct the algorithm \cref{algo:rowoperationSkeletonSK}, which takes as input the matrix $\Skeleton$ and outputs a matrix $\tilde{\Skeleton}$ of the same rank as $\Skeleton$. 

\begin{minipage}{0.65\textwidth}
\noindent\rule[7pt]{\linewidth}{0.4pt}
    \normalem 
    \begin{algorithm}[H]
     \KwData{$\Skeleton$ the skeleton matrix, $l_{max}$ the maximum length of a path}
     \KwResult{matrix $\tilde{\Skeleton}$ of the same rank as $\Skeleton$}
     \For{$l = l_{max}, l_{max} - 1, \ldots, 2$}{
      \For{$\pathn$ of length $l$}{
      let $\tilde{\pathn}$ be the path defined in \cref{lem:H0SK} for path $\pathn$\;
      $\Skeleton[\pathn, :] \xleftarrow{} \Skeleton[\pathn, :] - \Skeleton[\tilde{\pathn}, :]$\;
      }
     }
     \caption{Row operations on matrix $\Skeleton$}
     \label{algo:rowoperationSkeletonSK}
    \end{algorithm}
\noindent\rule[7pt]{\linewidth}{0.4pt}
\end{minipage}
\hfill
\begin{minipage}{0.33\textwidth}
    \begin{align}
        \hspace{-0.3cm}\tilde{\Skeleton} = \begin{pmatrix}
            1 & 0 & 0 & 0 & 0\\
            0 & 1 & 0 & 0 & 0\\
            0 & 0 & 0 & 0 & 1\\
            0 & 0 & 1 & 0 & 0\\
            0 & 0 & 0 & 1 & 0 \\
            0 & 0 & 0 & 0 & 1
        \end{pmatrix}\label{eq:tildeBH0SK}
    \end{align}
\end{minipage}
Let $\tilde{\Skeleton}$ be the matrix outputted by $\cref{algo:rowoperationSkeletonSK}$ and provided in \cref{eq:tildeBH0SK}, then, because of \cref{lem:H0SK}, each of its rows has only one non-null element thus its rank is equal to the number of its non-null columns, which is equal to $5$, which is also the dimension of $\theta$. It follows that  
\begin{align}
    \rk(\Skeleton) = \rk(\tilde{\Skeleton}) = 5 = \dimTheta - 0\;.
\end{align}
\subsection{One induction step}
We now suppose that the theorem is true for any DAG ReLU network without hidden nodes and perform one step of induction. We choose a network $\NN$ with one hidden node $h=1$ and whose representation is given in  \cref{fig:1HiddenNodeSketchOfProof}.
Its pathlifting $\PathL$ and its skeleton matrix $\Skeleton$ are described in \cref{eq:PhiAndSkeleton1HiddenNodeSketchOfProof1,eq:PhiAndSkeleton1HiddenNodeSketchOfProof2}.
\begin{figurebox}
    
    \begin{minipage}{0.6\textwidth}
        \centering
        \includegraphics[width = 0.6\textwidth]{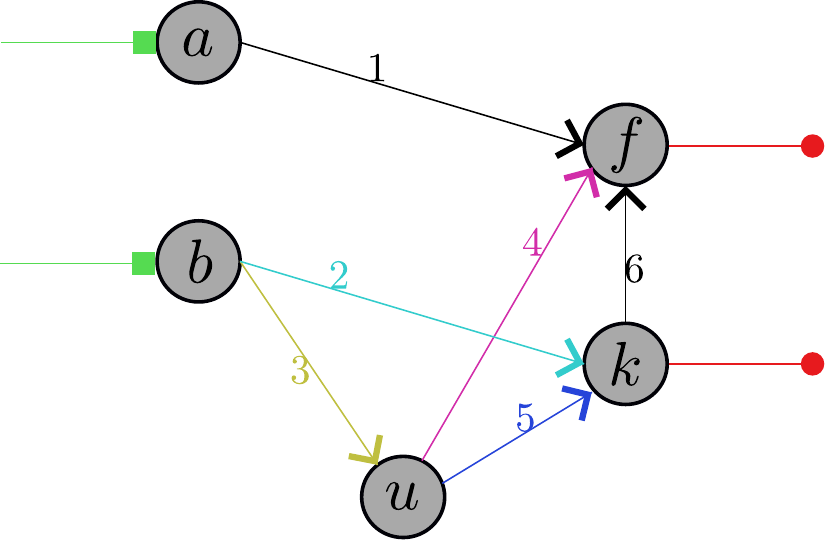}
        \captionof{figure}{DAG of a network with one hidden node, the green and the red arrows represent the input nodes and the output nodes, respectively. The nodes are indexed with letters and the edges with integers. The set of edges $\textcolor{colorEum}{E_{-}^u = \{3\}}$, $\textcolor{colorEup}{E_{+}^u = \{4\}}$ and $\textcolor{colorEkm}{E_{-}^k = \{2\}}$ are colored in dark yellow, purple and cyan. The blue edge $\textcolor{colore}{5}$ is the edge connecting the hidden node $u$ to the output node $k$.}
        \label{fig:1HiddenNodeSketchOfProof}
        
    \end{minipage}
    \hfill
    \begin{minipage}{0.4\textwidth}
        \begin{align}
            &\PathL(\theta) = \begin{pmatrix}
                \theta_1 \\
                \theta_2\\
                \theta_2\theta_6\\
                \theta_3\theta_4 \\
                \theta_3\theta_5\\
                \theta_3\theta_5\theta_6   
            \end{pmatrix} \label{eq:PhiAndSkeleton1HiddenNodeSketchOfProof1}\\
            \nonumber\\
            &\Skeleton = \begin{pmatrix}
                1 & 0 & 0 & 0 & 0 & 0\\
                0 & 1 & 0 & 0 & 0 & 0\\
                0 & 1 & 0 & 0 & 0 & 1\\
                0 & 0 & 1 & 1 & 0 & 0\\
                0 & 0 & 1 & 0 & 1 & 0 \\
                0 & 0 & 1 & 0 & 1 & 1
            \end{pmatrix} \label{eq:PhiAndSkeleton1HiddenNodeSketchOfProof2}
        \end{align}
    \end{minipage}
\end{figurebox}
For this part of the sketch, we need a topological sort of the network's nodes, which we choose as the list $\{a, b, u, k, f\}$. The key idea of the induction is \uline{to transform the last hidden node of the topological sort (node $u$), into an output node and then use the induction hypothesis}. 
In fact, this goal is achievable because the column of the skeleton matrix associated to the edge $\textcolor{blue}{5}$ connecting node $u$ and node $k$ is linearly dependent on the columns associated with the incomming and outcomming edges of node $u$.

We perform this operation in two steps, which is one extra step compared to the induction initialization. In a first step, we permute the columns and the rows of $\Skeleton$ to uncover its structure then, we perform column and row operations on it. 
\begin{myproof}
\textbf{step 1 : re-ordering}\\
We consider $u$ the unique hidden node of the network and identify the next node in the topological sort that is connected to it: node $k$. \\
Then, we defined four types of edges in the graph, which can be visualized with different colors in \cref{fig:1HiddenNodeSketchOfProof}. The edge $\textcolor{blue}{5}$, that make the connection between node $u$ and node $k$; the out-going edges of $u$ except $\textcolor{blue}{5}$ noted $\textcolor{colorEup}{E_{+}^u}$; the incomming edges of node $u$ noted $\textcolor{colorEum}{E_{-}^u}$; and the in-going edges of node $k$ except $\textcolor{blue}{5}$ noted $\textcolor{colorEkm}{E_{-}^k}$. We have
\begin{align*}
&&\textcolor{colorEup}{E_{+}^u = \{4\}} &&\textcolor{colorEum}{E_{-}^u = \{3\}} &&\textcolor{colorEkm}{E_{-}^k = \{2\}} \;.
\end{align*}
Then, we rearrange the columns of $\Skeleton$ such that its first column is associated to the edge $\textcolor{blue}{5}$, the second column is associated to the edge in $\textcolor{colorEup}{E_{+}^u}$, the thrid column to the edge in $\textcolor{colorEum}{E_{-}^u}$, the fourth column to the edges in $\textcolor{colorEkm}{E_{-}^k}$ and then the rest of the edges. We save the result of this permutation in the matrix $\Skeleton'$ whose columns are now ordered with respect to the list of edges $\{\textcolor{blue}{5}, \textcolor{colorEup}{4}, \textcolor{colorEum}{3}, \textcolor{colorEkm}{2}, 1, 6\}$. The rank of matrix $\Skeleton'$ is equal to the rank of matrix $\Skeleton$.

\begin{minipage}{0.48\textwidth}
    \begin{align}
    \Skeleton' = \begin{matrix}
                    \; \textcolor{colorEup}{E_+^u}\; \textcolor{colorEum}{E_{-}^u}\; \textcolor{colorEkm}{E_{-}^k}\;\; \;\;\\
                    \textcolor{blue}{5}\quad \textcolor{colorEup}{4}\quad \textcolor{colorEum}{3}\quad \textcolor{colorEkm}{2}\quad 1\quad 6\\
                    \vspace{-0.2cm}\\
                    \begin{pmatrix}
                        0 & 0 & 0 & 0 & 1 & 0\\
                        0 & 0 & 0 & 1 & 0 & 0\\
                        0 & 0 & 0 & 1 & 0 & 1\\
                        0 & 1 & 1 & 0 & 0 & 0\\
                        1 & 0 & 1 & 0 & 0 & 0 \\
                        1 & 0 & 1 & 0 & 0 & 1
                    \end{pmatrix} 
                \end{matrix}\label{eq:SkeletonPrime1HiddenNodeSketchOfProof}
    \end{align}
\end{minipage}
\hfill
\begin{minipage}{0.5\textwidth}
    \begin{align}
    \Skeleton'' = 
            \begin{matrix}
                \begin{matrix}
                    \\
                    \\
                    \vspace{-0.1cm}\\
                    \PathL(\theta)_5\\
                    \PathL(\theta)_6\\
                    \PathL(\theta)_4\\
                    \PathL(\theta)_2\\
                    \PathL(\theta)_3\\
                    \PathL(\theta)_1
                \end{matrix}&
                \begin{matrix}
               \; \textcolor{colorEup}{E_+^u}\; \textcolor{colorEum}{E_{-}^u}\; \textcolor{colorEkm}{E_{-}^k}\;\; \;\;\\
                    \textcolor{blue}{5}\quad \textcolor{colorEup}{4}\quad \textcolor{colorEum}{3}\quad \textcolor{colorEkm}{2}\quad 1\quad 6\\
                    \vspace{-0.2cm}\\
                    \begin{pmatrix}
                        1 & {\color{blue}{0}} & 1 & {\color{purple}{0}} & 0 & 0 \\
                        1 & {\color{blue}{0}} & 1 & {\color{purple}{0}} & 0 & 1 \\
                        {\color{blue}{0}} & 1 & 1 & {\color{purple}{0}} & 0 & 0\\
                        {\color{teal}{0}} & {\color{teal}{0}} & {\color{teal}{0}} & 1 & 0 & 0\\
                        {\color{teal}{0}} & {\color{teal}{0}} & {\color{teal}{0}} & 1 & 0 & 1\\
                        {\color{teal}{0}} & {\color{teal}{0}} & {\color{teal}{0}} & 0 & 1 & 0
                    \end{pmatrix}
                \end{matrix} 
            \end{matrix}\label{eq:SkeletonDoublePrime1HiddenNodeSketchOfProof}
\end{align}   
\end{minipage}
\vspace{0.2cm}\\
Then, we permute the rows of $\Skeleton'$ by grouping on top the paths crossing both node $u$ and node $k$ : $\{\PathL(\theta)_5, \PathL(\theta)_6\}$, then all the paths crossing only node $u$ : $\{\PathL(\theta)_4\}$, then all the paths crossing only node $k$ : $\{\PathL(\theta)_2, \PathL(\theta)_3\}$ and then all the  remaining ones :$\{\PathL(\theta)_1\}$. We save the result of this permutation in the matrix $\Skeleton''$ whose rows are now ordered with respect to the list of paths $\{\PathL(\theta)_5, \PathL(\theta)_6, \PathL(\theta)_4, \PathL(\theta)_2, \PathL(\theta)_3, \PathL(\theta)_1\}$. The rank of matrix $\Skeleton''$ is still equal to rank of matrix $\Skeleton$.

With the writing of $\Skeleton''$, the structure of the initial matrix $\Skeleton$ is revealed with the blue, the green, and the purple zeros, which are proved in the main proof to be the result of the performed row and column re-ordering. 
\end{myproof}
\begin{myproof}
\textbf{step 2: column and row operations}\\
We now perform two operations on the columns of $\Skeleton''$. First, we use the columns associated with the edge in $E_{-}^u$ that is full of ones on its top and full of zeros on its bottom (a property that is proved in the main proof as a direct consequence of the previous permutations). 
We subtract this column to the first column of $\Skeleton''$. Then, we take the column of $E_{+}^u$, i.e., the second column, and we add it to the first column. The resulting matrix $\tilde{\Skeleton}$ is of the same rank as $\Skeleton$ and is equal to 
\begin{align}
    \tilde{\Skeleton}\; = \;
            \begin{matrix}
                \begin{matrix}
                    \\
                    \\
                    \vspace{-0.1cm}\\
                    \PathL(\theta)_5\\
                    \PathL(\theta)_6\\
                    \PathL(\theta)_4\\
                    \PathL(\theta)_2\\
                    \PathL(\theta)_3\\
                    \PathL(\theta)_1
                \end{matrix}&
                \begin{matrix}
                    \; \textcolor{colorEup}{E_+^u}\; \textcolor{colorEum}{E_{-}^u}\; \textcolor{colorEkm}{E_{-}^k}\;\; \;\;\\
                    \textcolor{blue}{5}\quad \textcolor{colorEup}{4}\quad \textcolor{colorEum}{3}\quad \textcolor{colorEkm}{2}\quad 1\quad 6\\
                    \vspace{-0.2cm}\\
                    \begin{pmatrix}
                        0 & {\color{blue}{0}} & 1 & {\color{purple}{0}} & 0 & 0 \\
                        0 & {\color{blue}{0}} & 1 & {\color{purple}{0}} & 0 & 1 \\
                        {\color{blue}{0}} & 1 & 1 & {\color{purple}{0}} & 0 & 0\\
                        {\color{teal}{0}} & {\color{teal}{0}} & {\color{teal}{0}} & 1 & 0 & 0\\
                        {\color{teal}{0}} & {\color{teal}{0}} & {\color{teal}{0}} & 1 & 0 & 1\\
                        {\color{teal}{0}} & {\color{teal}{0}} & {\color{teal}{0}} & 0 & 1 & 0
                    \end{pmatrix}
                \end{matrix} 
            \end{matrix} \; .\label{eq:SkeletonTilde1HiddenNodeSketchOfProof}
\end{align}
We remark that the column associated to the edge $\textcolor{blue}{5}$ is full of zeros, thus the obtained matrix $\tilde{\Skeleton}$ get closer to the skeleton matrix of a network whose graph is the same of $\NN$ but without edge $\textcolor{blue}{5}$.
In fact, we are one step away from this affirmation as we still need to remove the $1$ associated to the paths that were crossing the edge $\textcolor{blue}{5}$ but not ending at node $k$. Indeed, remark that the second row on $\tilde{\Skeleton}$, and which was previously associated to $\PathL(\theta)_6$, does not correspond to a path anymore and one should remove either its dependency on $6$ or its dependency in $3$.\\

To do so, we perform row operations on $\tilde{\Skeleton}$ using some symmetries of $\tilde{\Skeleton}$ which are similar to those used in the initialization.
\begin{lemma}\label{lem:ppcoincideSK}
    \lemppcoincideSK
\end{lemma}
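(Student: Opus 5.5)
The plan is to exploit the fact that, in a topological order, every node after an output node $k$ is itself an output node. The key step is to show that the prefix $\tilde{\pathn}$ of $\pathn$, obtained by deleting its last edge, is itself a genuine path of the network. Here a genuine path means one running from an input node to an output node. Once this is established, the identity on the rows of $\tilde{\Skeleton}$ follows directly from how rows are indexed by edge sets.

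First, I would fix a path $\pathn$ of length $l\geq 2$ that crosses node $k$ but does not end at $k$. Write its nodes as $n_0, n_1, \ldots, n_l$ and its edges as $\edge_1, \ldots, \edge_l$. Here $n_0\in I$, $n_l\in O$, and $n_j = k$ for some $j<l$. Since the node indexing is a topological sort and edges go from lower to higher index, the nodes along $\pathn$ are strictly increasing. Hence $n_{l-1}\geq n_j = k$.

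If $n_{l-1}=k$, then $n_{l-1}$ is an output node by assumption. If $n_{l-1}>k$, then $n_{l-1}$ is an output node by the hypothesis that every node after $k$ is an output. In both cases the truncated sequence $n_0,\ldots,n_{l-1}$ with edges $\edge_1,\ldots,\edge_{l-1}$ starts at an input node and ends at an output node. It is therefore a path $\tilde{\pathn}$ of length $l-1$ that coincides with $\pathn$ on its first $l-1$ edges, and on its first $l$ nodes. I would remark that this path is distinct from $\pathn$, so it indexes a different row. This is the step that carries the content of the lemma, and it is the place where the hypothesis on $k$ is genuinely needed: without it, $n_{l-1}$ could be hidden and $\tilde{\pathn}$ would not be a path.

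Second, I would deduce the row identity. The matrix $\tilde{\Skeleton}$ here is the matrix produced in step 2 of the induction, not the original skeleton. I therefore have to check that the rows indexed by $\pathn$ and $\tilde{\pathn}$ are still the indicator vectors of their edge sets, restricted to the columns that were not modified. The only modified column is the one of the edge $u\to k$, and that column is now zero.

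The main obstacle is this bookkeeping for the zeroed column, which I would handle by cases.
\begin{itemize}
\item If $\pathn$ does not use edge $\textcolor{blue}{5}$, then neither does $\tilde{\pathn}$, and both rows are unchanged skeleton rows.
\item If $\pathn$ does use edge $\textcolor{blue}{5}$, that edge precedes $k$ and hence lies in both $\pathn$ and $\tilde{\pathn}$. Its entry is $0$ in both rows after the column operation, so the difference is still unaffected.
\end{itemize}
In either case,
\begin{align}
\tilde{\Skeleton}[\pathn,:]-\tilde{\Skeleton}[\tilde{\pathn},:] = \mathbf{1}_{\{\edge_1,\ldots,\edge_l\}}-\mathbf{1}_{\{\edge_1,\ldots,\edge_{l-1}\}} = \mathbf{1}_{\{\edge_l\}} .
\end{align}
This uses that, by acyclicity, $\edge_l$ differs from all earlier edges. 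The result is exactly the claimed vector, whose only nonzero entry is at the last edge crossed by $\pathn$.
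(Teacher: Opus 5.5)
Your proof is correct and follows the paper's route. The paper defers to the proof of the $h=0$ lemma: dropping the last edge gives a genuine path because the penultimate node is an output node, and the only altered column of $\tilde{\Skeleton}$, that of $\edge_{\leftrightarrow}$, is zero, so the row difference is unchanged. Your version makes explicit what the paper leaves implicit: $n_{l-1}\geq k$ by the topological order, so the hypothesis on $k$ plays the role that the absence of hidden nodes plays in the $h=0$ case, and $\edge_l\neq\edge_{\leftrightarrow}$.
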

Once again, this lemma can be easily verified here. The paths crossing node $k$ and not ending at $k$ are the path $\PathL(\theta)_3 = \theta_2\theta_6$ which coincides with $\PathL(\theta)_2 = \theta_2$ and the path $\PathL(\theta)_6 = \theta_3\theta_5\theta_6$ which coincides with $\PathL(\theta)_5 = \theta_3\theta_5$.
We now use the following algorithm to take advantage of \cref{lem:ppcoincideSK}.\\

\begin{minipage}{0.6\textwidth}
\noindent\rule[7pt]{\linewidth}{0.4pt}
    \normalem 
    \begin{algorithm}[H]
     \KwData{$\tilde{\Skeleton}$ the modified skeleton matrix, $\Paths$ the list of path crossing node $k$ and not ending at node $k$, $l_{min}, l_{max}$ the minimum and maximum length of a path in $\Paths$.}
     \KwResult{matrix $\tilde{\Skeleton}'$ of the same rank as $\tilde{\Skeleton}$}
     \For{$l\in \{l_{max}, l_{max} -1,  \cdots, l_{min}\}$}{
     \For{$p\in \Paths$ and $p$ of length $l$}{
      Let $\tilde{\pathn}$ be the  path of \cref{lem:ppcoincideSK} coinciding with $\pathn$ on its $l-1$ first nodes.
      $\tilde{\Skeleton}[\pathn, :]\leftarrow \tilde{\Skeleton}[\pathn, :] - \tilde{\Skeleton}[\tilde{\pathn}, :] $
      }
      }
     \caption{Row operations on matrix $\tilde{\Skeleton}$}
     \label{algo:rowoperationSkeletonReccurenceSK}
    \end{algorithm}
\noindent\rule[7pt]{\linewidth}{0.4pt}
\end{minipage}
\begin{minipage}{0.4\textwidth}
    \begin{align}
        \tilde{\Skeleton}' &= 
                \begin{matrix}
                    \begin{matrix}
                        \; \textcolor{colorEup}{E_+^u}\; \textcolor{colorEum}{E_{-}^u}\; \textcolor{colorEkm}{E_{-}^k}\;\; \;\;\\
                    \textcolor{blue}{5}\quad \textcolor{colorEup}{4}\quad \textcolor{colorEum}{3}\quad \textcolor{colorEkm}{2}\quad 1\quad 6\\
                    \vspace{-0.2cm}\\
                        \begin{pmatrix}
                            0 & {\color{blue}{0}} & 1 & {\color{purple}{0}} & 0 & 0 \\
                            0 & {\color{blue}{0}} & 0 & {\color{purple}{0}} & 0 & 1 \\
                            {\color{blue}{0}} & 1 & 1 & {\color{purple}{0}} & 0 & 0\\
                            {\color{teal}{0}} & {\color{teal}{0}} & {\color{teal}{0}} & 1 & 0 & 0\\
                            {\color{teal}{0}} & {\color{teal}{0}} & {\color{teal}{0}} & 0 & 0 & 1\\
                            {\color{teal}{0}} & {\color{teal}{0}} & {\color{teal}{0}} & 0 & 1 & 0
                        \end{pmatrix}
                    \end{matrix} 
                \end{matrix}\label{eq:tildeSkeletonPrime1HiddenNode0SP}
    \end{align}
\end{minipage}
For our specific case, this algorithm subtracts the row associated with $\PathL(\theta)_2$ from the row associated with $\PathL(\theta)_3$ and the row associated with $\PathL(\theta)_5$ from the row associated with $\PathL(\theta)_6$. We obtain the matrix $\tilde{\Skeleton}'$ that is of the same rank as $\Skeleton$ and is given in \cref{eq:tildeSkeletonPrime1HiddenNode0SP}. 

With the matrix $\tilde{\Skeleton}'$, for any edges connecting two output nodes and which were previously on a paths crossing node $u$ (here the edge $6$), it exists a line in $\tilde{\Skeleton}'$ with only one no-null element on its row associated to that edge index (line $2$ and $5$).

We use these rows, for example the second line of $\tilde{\Skeleton}'$, to remove all the dependency of the edge $6$ in the other lines of $\tilde{\Skeleton}'$. This operation is done by subtracting the second line of $\tilde{\Skeleton}'$ from the fifth line of $\tilde{\Skeleton}'$. We save the result of this operation in the matrix $\tilde{\Skeleton}''$, which is still of the same rank as $\Skeleton$ and is provided in \cref{eq:tildeSkeletonPrimePrime}.\\
\begin{figurebox}
\begin{minipage}{0.56\textwidth}   
    \begin{align}
        \hspace{-0.7cm}\tilde{\Skeleton}'' = 
                \begin{matrix}
                    \begin{matrix}
                        \hspace{0.4cm}5\quad 4\quad 3\quad 2\quad 1\;\;\;\hspace{-0.25em}\rule[-1.2ex]{0.5pt}{2.8ex}\hspace{0.25em}\hspace{-0.185cm}6\quad\\
                        \begin{pmatrix}
                            0 & {\color{blue}{0}} & 1 & {\color{purple}{0}} & 0 & \hspace{-0.25em}\rule[-1.2ex]{0.5pt}{2.8ex}\hspace{0.25em}\hspace{-0.185cm}0 \\
                            0 & {\color{blue}{0}} & 0 & {\color{purple}{0}} & 0 & \hspace{-0.25em}\rule[-1.2ex]{0.5pt}{2.8ex}\hspace{0.25em}\hspace{-0.185cm}1 \\
                            {\color{blue}{0}} & 1 & 1 & {\color{purple}{0}} & 0 & \hspace{-0.25em}\rule[-1.2ex]{0.5pt}{2.8ex}\hspace{0.25em}\hspace{-0.185cm}0\\
                            {\color{teal}{0}} & {\color{teal}{0}} & {\color{teal}{0}} & 1 & 0 & \hspace{-0.25em}\rule[-1.2ex]{0.5pt}{2.8ex}\hspace{0.25em}\hspace{-0.185cm}0\\
                            {\color{teal}{0}} & {\color{teal}{0}} & {\color{teal}{0}} & 0 & 0 & \hspace{-0.25em}\rule[-1.2ex]{0.5pt}{2.8ex}\hspace{0.25em}\hspace{-0.185cm}0\\
                            {\color{teal}{0}} & {\color{teal}{0}} & {\color{teal}{0}} & 0 & 1 & \hspace{-0.25em}\rule[-1.2ex]{0.5pt}{2.8ex}\hspace{0.25em}\hspace{-0.185cm}0
                        \end{pmatrix}
                    \end{matrix} 
                \end{matrix},\;
            \PathL = \begin{pmatrix} 
                            \theta_3\\
                            \\
                            \theta_3\theta_4\\
                            \theta_2\\
                            \\
                            \theta_1\\
                            \end{pmatrix}\label{eq:tildeSkeletonPrimePrime}
    \end{align}
\end{minipage}
\hfill
\begin{minipage}{0.4\textwidth}
        \includegraphics[width = 0.8\textwidth]{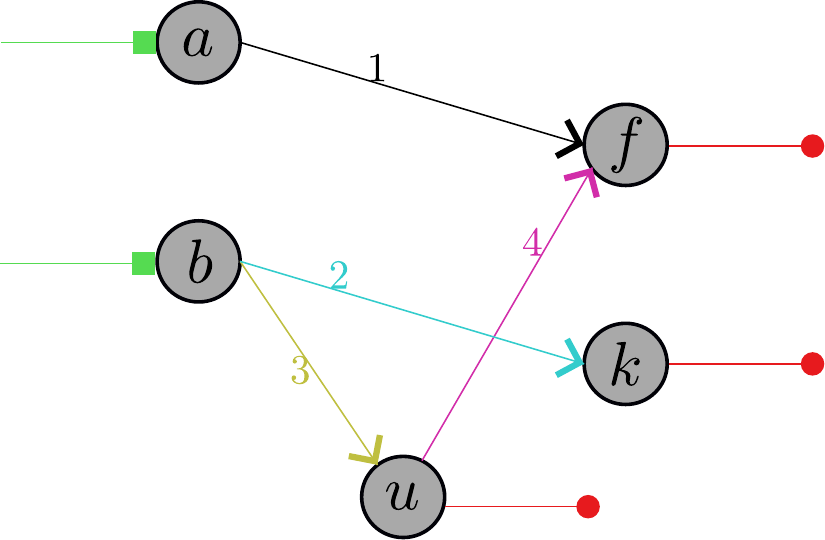}
        \captionof{figure}{
DAG of a network with no hidden node associated with the pathlifting function and skeleton matrix of \cref{eq:tildeSkeletonPrimePrime}. }
        \label{fig:1to0HiddenNodeSketchOfProof}
\end{minipage}
\end{figurebox}
When removing the columns associated with edge $6$ in $\tilde{\Skeleton}''$, we remark that the resulting matrix is associated with a DAG ReLU network whose graph is the same as $\NN$ but without the edges $5$ and $6$ and where the node $u$ has been transformed into an output node. The pathlifting and DAG associated with such a graph are provided in  \cref{eq:tildeSkeletonPrimePrime,fig:1to0HiddenNodeSketchOfProof}.
\end{myproof}
Let $F$ be the skeleton matrix of the DAG ReLU network of \cref{fig:1to0HiddenNodeSketchOfProof}, then, because we have emptied the second row and last column of $\tilde{\Skeleton}''$, we have $\rk(\tilde{\Skeleton}'') = \rk(F) + 1$, by induction $\rk(F) = \dimTheta - 2$, it follows that
\begin{align}
    \rk(\Skeleton) = \rk(\tilde{\Skeleton}'') = \rk(F) + 1 = \dimTheta -2 + 1 = \dimTheta - h
\end{align}
which concludes the sketch.

%% file: Experiment.tex
\section{Experiment}\label{sec:experiment}
In this last section of the paper, we use the Python package available at this \href{https://gitlab.inria.fr/mverbock/pathliftingandskeleton/-/tree/main?ref_type=heads}{GitLab} to compare the computational cost of computing $\partial_\theta\PathL(\theta)\in\R(\dimPathL,\dimTheta)$ when such computation is done using \cref{prop:PathLJacobian} or by backpropagation with the Python package \texttt{torch}. We provided the details on the experiment and the pseudo code for the computation of the pathlifting $\PathL(\theta)$ and the skeleton matrix $\Skeleton$ for an arbitrary feed-forward network in \cref{sec:suppexperiment}.

We consider a feed forward ReLU network $f_\theta :\R(\dimx)\mapsto\R(\dimy)$ with $\deep$ hidden layers and where each hidden layer $l$ has $\width_l$ hidden nodes. We denote $\dimTheta$ the dimension of the network parameters $\theta$ and $\dimPathL$ the dimension of the pathlifting vector $\PathL(\theta)$.  We consider two methods to compute the Jacobian $\partial_\theta\PathL(\theta)$.
\begin{itemize}
    \item[$\flower$] The method using the skeleton matrix with $\partial_\theta\PathL(\theta) = \diag(\PathL(\theta))\Skeleton\diag(\frac{1}{\theta})$ of \cref{prop:PathLJacobian}.
    \item[$\flower$] The method using PyTorch automatic differentiation \cite{AutoDiff} at each index of $\PathL(\theta)$.
\end{itemize}
The implementation of those two methods are provided in \cref{algo:JacobianSkeleton,algo:JacobianNoSkeleton}.

In the left figure \cref{fig:JacobianComputationTime}, we plot the time in seconds to compute the pathlifting Jacobian using the two described methods. 
This computational time is displayed as a function of the variable $\dimPathL\dimTheta$, which is obtained by taking $\deep\in\llbracket1, 3\rrbracket$ and $\width_l$ is constant through the hidden layers and equal to $w\in\llbracket 1, 8\rrbracket$. Each setting is evaluated 5 times, and we plot the mean as a solid line and the standard deviation as a transparent line. As expected, computing the Jacobian with the skeleton matrix is faster than using the naive method, which loops over the dimensions of the pathlifting. To fairly compare the two methods, and because \cref{algo:JacobianSkeleton} uses the skeleton matrix as an additional input, we also plot the time complexity of computing the skeleton matrix (\cref{algo:SkeletonMatrix}) at the bottom of \cref{fig:JacobianComputationTime}. We see that such computational cost grows with the number of paths; however, such cost can easily be neglected because the skeleton matrix does not depend on $\theta$ and can be computed once and for all at the network initialization and reused for the different computations of $\partial_\theta\PathL(\theta)$.
\begin{remark}
    This computational gain holds when the manipulated tensors $\PathL(\theta)$ and $\Skeleton$ can be stored in memory. However, for theoretical study with relatively small networks, the gain in time is huge.
\end{remark}

\begin{minipage}{0.54\textwidth}
    \centering
    \includegraphics[width=0.85\textwidth]{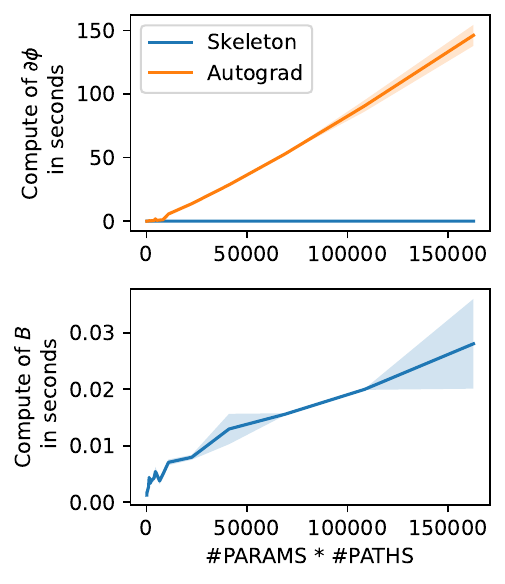}
    \captionof{figure}{
Times in seconds to compute the Jacobian matrix $\partial_\theta\PathL(\theta)$ and the skeleton matrix $\Skeleton$ using \cref{algo:JacobianSkeleton,algo:JacobianNoSkeleton} as a function of the variable $\dimPathL\dimTheta$. [~$\dimx=5, \dimy=2$]}
    \label{fig:JacobianComputationTime}
\end{minipage}
\hfill
\begin{minipage}{0.43\textwidth}
   \begin{minipage}{\textwidth}
    \noindent\rule[7pt]{\linewidth}{0.4pt}
        \normalem 
        \begin{algorithm}[H]
        \KwData{$\theta$ the current parameters.$\phi :=$ the pathlifting at point $\theta$, $\Skeleton$ the skeleton matrix}
        \KwResult{$J$ the Jacobian of $\phi$ at $\theta$}
        $u \leftarrow \frac{1}{\theta}$\;
        $J = \texttt{torch.einsum}('\;\dimPathL, \dimPathL\dimTheta, \dimTheta->\dimPathL\dimTheta\;', \phi, \Skeleton, u)$\;
        \caption{Jacobian of $\PathL(\theta)$ with  $\Skeleton$}
        \label{algo:JacobianSkeleton}
        \end{algorithm}
    \noindent\rule[7pt]{\linewidth}{0.4pt}
\end{minipage}

\begin{minipage}{\textwidth}
    \noindent\rule[7pt]{\linewidth}{0.4pt}
        \normalem 
        \begin{algorithm}[H]
        \KwData{$\theta$ the current parameters.$\phi :=$ the pathlifting at point $\theta$, }
        \KwResult{$J$ the Jacobian of $\phi$ at $\theta$}
        J = \texttt{zeros}($\dimPathL, \dimTheta$)\;
        \For{$i \in [\dimPathL]$}{
                $J[i,:] = \texttt{torch.autograd.grad}($$\phi_i, \theta,$ $retain\_graph=True)$\;
        }
        \caption{Jacobian of $\PathL(\theta)$ with autograd}
        \label{algo:JacobianNoSkeleton}
        \end{algorithm}
    \noindent\rule[7pt]{\linewidth}{0.4pt}
\end{minipage}
\end{minipage}

%% file: Conclusion.tex
\section{Conclusion}
We hope this paper has helped the reader to grasp the key element of the induction and the structure of the pathlifting Jacobian of a DAG ReLU network. 

The computation of pathlifting and its Jacobian remains an issue for large networks; however, the provided module is a useful tool for testing conjectures and manipulating the pathlifting and its Jacobian on toy networks.

\section*{Acknowledgement}
I gratefully acknowledge my supervisor Rémi Gribonval for his guidance during this work. \\
I gratefully acknowledge the support of the Centre Blaise Pascal's IT test platform at ENS de Lyon (Lyon, France) for Machine Learning facilities. The platform operates the SIDUS solution \cite{CBP} developed by Emmanuel Quemener.\\
This work was supported in part by the AllegroAssai ANR19-CHIA-0009 project of the French Agence Nationale de la Recherche (ANR) and by the SHARP ANR project ANR-23PEIA-0008 in the context of the France 2030 program.

%% file: Proofs.tex
\section{Proofs}\label{sec:proofs}
\input{Rank_phi_equals_rank_Skeleteton.tex}

\input{Rank_Skeleton}

\input{DetailsOnExperiement.tex}

%% file: Rank_phi_equals_rank_Skeleteton.tex
\begin{proposition}\label{proof:PhiLinearizesf}
    \PhiLinearizesf
\end{proposition}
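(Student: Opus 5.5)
The plan is to express the output of the DAG ReLU network as a sum over paths, once the activation pattern of every hidden node is fixed, and then to show that this pattern is locally constant around $\theta$ for $\mu$-almost every $(\theta,\{x_i\})$.

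\textbf{Path expansion.} Fix an input $x$ and a parameter $\theta$. For each hidden node $v$, let $a_v(\theta,x)\in\{0,1\}$ record whether $v$ is active. Identity nodes always have $a_v=1$, and bias input nodes carry the value $1$. Unrolling the computational graph in topological order gives, for each output node $o$,
\begin{align*}
f_\theta(x)_o = \sum_{\pathn \text{ ending at } o} x_{\text{start}(\pathn)} \Big(\prod_{v\in \pathn\cap H} a_v(\theta,x)\Big) \PathL(\theta)_\pathn .
\end{align*}
I would prove this by induction over the topological sort, using the recursion $z_v = a_v\sum_{(u,v)\in E}\theta_{(u,v)} z_u$. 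The recursion itself comes from the identity $\max(0,t)=\mathbf{1}_{t>0}\,t$. Output nodes count as path endpoints even when they have outgoing edges. Paths continuing past such a node are simply further paths, so the expansion stays consistent.

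\textbf{Defining $\mathscr{L}$.} For the dataset, $\mathscr{L}$ sends $\phi\in\R(\dimPathL)$ to the family of vectors
\begin{align*}
\Big(\sum_{\pathn\to o} x_{i,\text{start}(\pathn)}\prod_{v\in\pathn\cap H} a_v(\theta,x_i)\,\phi_\pathn\Big)_{o,i}.
\end{align*}
This map is linear in $\phi$ whenever the activation patterns are held fixed. It remains to show that, almost surely, the patterns $a_v(\theta',x_i)$ coincide with $a_v(\theta,x_i)$ for every $\theta'$ in a neighborhood $O$ of $\theta$.

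\textbf{Local constancy, the main obstacle.} Every pre-activation $z_v^{\text{pre}}(\theta,x_i)$ is continuous in $\theta$, as a composition of continuous maps. If every pre-activation is nonzero at $(\theta,x_i)$, its sign is preserved on a neighborhood, and intersecting these finitely many neighborhoods gives $O$.

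So the work is to show that $\{z_v^{\text{pre}}(\theta,x_i)=0\}$ has Lebesgue measure zero for each $v$ and $i$. I would proceed by induction on the topological order. The joint space of $(\theta,x)$ splits into finitely many regions, each a semi-algebraic set given by the sign patterns of earlier nodes, and on each region $z_v^{\text{pre}}$ is a polynomial $Q$ in $(\theta,x)$.

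Within a region where the pattern is fixed, there are two cases. If $Q$ is not identically zero, its zero set is a proper algebraic variety and therefore null. If $Q$ vanishes identically on a region of positive measure, then in that region $z_v^{\text{pre}}=0$. This is harmless: $v$ outputs $0$ whatever the ReLU decides, so either choice of $a_v$ yields the same path expansion. We can therefore set $a_v=0$ there by convention without breaking local constancy.

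To make this clean, I would define $a_v=\mathbf{1}_{z_v^{\text{pre}}>0}$ and argue on the event where no $Q$ that is not identically zero vanishes. This event has full measure because $\mu$ is absolutely continuous and we take finitely many nodes, data points and regions. On the interior of the regions, the expansion with frozen patterns holds on a neighborhood.

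The delicate point is the boundaries between regions, but these are contained in zero sets of earlier nonzero polynomials and so are null as well. Together with the two steps above, this yields the existence of $O$ and $\mathscr{L}$ as stated.
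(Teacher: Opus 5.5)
Your proof is correct and has the same overall structure as the paper's proof. You write $f_{\theta}(x_i)_o$ as a sum over paths ending at $o$ of $x_i[\text{start}(\pathn)]\,\PathL(\theta)_\pathn$ times the activation indicators of the hidden nodes crossed. You freeze those indicators at $\theta$ to define $\mathscr{L}$, and you get $O$ from continuity of the pre-activations in $\theta$.

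The substantive difference is the measure-zero step. The paper fixes $\theta$ and treats each pre-activation as piecewise polynomial in $x$. Via Fubini, it concludes that almost surely \emph{no} pre-activation vanishes at any $(\theta,x_i)$, and its continuity argument relies on this. That claim is false in general. For example, take a hidden node $v$ whose only incoming edge comes from a hidden node $u$. Its pre-activation is $\theta_2\max(0,\theta_1 x)$, which is identically $0$ on the positive-measure set $\{\theta_1 x<0\}$.

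You instead work in the joint $(\theta,x)$ space with finitely many polynomials $Q_{v,\sigma}$, one per node and per activation pattern $\sigma$ of the earlier nodes. You discard only the zeros of those $Q_{v,\sigma}$ that are not the zero polynomial, and you set $a_v=\mathbf{1}_{z_v^{\text{pre}}>0}$. Your induction then runs as follows: if the earlier pattern is constant near the point, then $z_v^{\text{pre}}=Q_{v,\sigma}$ there, which is either $\equiv 0$ or nonzero at the point. This gives local constancy on a full-measure set, dead units included. Your route costs a little more bookkeeping, but it is the one that actually establishes the statement as written.

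One small fix: restricting the product to $\pathn\cap H$ tacitly assumes that output nodes use the identity activation, a reduction the paper makes explicitly. Either state that reduction, or include in the product the indicators of ReLU output nodes lying on the path, whether intermediate or terminal.
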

\begin{proof}
    Let $\{x_i\}_{i=1}^n\in\R(\dimx)^{\otimes n}$ be a finite dataset and let $\mu$ be a measure on the network parameters $\theta$ and $\{x_i\}_{i=1}^n$ that is absolutely continuous with respect to the Lebesgue measure. We restrict our proof to the networks whose output nodes are associated with the identity activation function. This is not restrictive, as any DAG ReLU network can be mapped to such a network.

    Without loss of generality, we suppose that the input nodes associated with the biases have been concatenated to the input, i.e $x_i \leftarrow \begin{pmatrix} x_i \\ 1\end{pmatrix}\in\R(\dimx)$.\\
    To construct the appropriate linear function $\mathscr{L}$, we first construct a neighborhood $O$ of $\theta$ on which the activation pattern of the network is constant, and then construct the appropriate linear function.
    
    \begin{myproof}
    \textbf{Construction of the neighborhood $O$.}
    Let $\theta\in\R(\dimTheta)$ be a parameter initialization and let $A(\theta, x)\in\R(s)$ be the pre-activation of the network at point $x$ with parameter $\theta$. For any $x\in\R(\dimx)$, the function $A(\cdot, x)$ is continous, and for any $\theta$ each component of $A(\theta, \cdot)$ is a piecewise polynomial function.\\
    
    For a fixed $\theta$ with no null coordinates, each function index of $A(\theta, \cdot)$ is piecewise polynomial, and the number of regions on which it is polynomial is countable. Because the set of roots of a non zero polynomial function is a hyperplane of null Lebesgue measure,  the roots of one function index of $A(\theta, \cdot)$ are included in a countable set of hyperplanes which is of null Lebesgue measure. As a consequence, the measure of the event $\{\exists j, i$ such that a pre-activation  has one null coordinate$\}$ is less than the measure of the union of a countable set of hyperplanes, which is null. 
    
    It follows that
    \begin{align*}
        \mu\bigg(\bigg\{(\theta, \{x_i\}_{i=1})\; | \; \exists j, i \quad A(\theta, x_i)_j = 0\bigg\}\bigg) 
        &= \int_{\theta}\mu\bigg(\bigg\{\{x_i\}_{i=1}^n \; | \; \exists j, i \quad A(\theta, x_i)_j = 0\bigg\}\bigg) d\mu_1(\theta)
    \end{align*}
    with $\mu_1$ the marginal of $\mu$ on $\theta$. The function in the integral is null $\mu_1-$almost surely (for $\theta$ with no null coordinates), as a consequence the integral is null and the event of interest is of null Lebesgue measure.  Thus, without loss of generality, we suppose that for all $i\in\{1, ..., n\}$ the activation pattern of the network at point $x_i$ noted $A(\theta, x_i)$ has no null coordinates, this event being of null Lebesgue measure.\\
    
    Because $A(\cdot, x_i)$ is continuous and $A(\theta, x_i)$ has no null coordinates for all $i$, let $\epsilon$ such that for any $i$ and $\theta'\in\mathcal{B}_\theta(\epsilon)$, the ball centered at $\theta$ and of radius $\epsilon$, the activation pattern $ A(\theta', x_i)$ has no null coordinates.\\
    
    We set $O := \mathcal{B}_\theta(\epsilon)$ a neighborhood of $\theta$. Because $A(\cdot, x)$ is continuous, the activation pattern of the network at point $x_i$ is constant on $\theta'\in O$, and we denote it $F = [act(A(\theta, x_i)), \cdots, act(A(\theta, x_n))]\in\R(n, s)$, where $act$ is the element-wise application of the positive part function at index $j$ when the assoaciated activation function is ReLU and $1$ otherwise.\\ 
    \end{myproof}
    \begin{myproof}
        \textbf{Construction of the linear function $\mathscr{L}$.} Each path of the network connects one input node to one output node and is active at point $x_i$ if and only if all the index of the $i^{th}$ columns of $F$, i.e. $act(A(\theta, x_i))$, associated to that path are equal to one. Those indexes are also the ones of the nodes crossed by the path.\\

        For $\pathn$ a path, let $j^p_x$ be the input node index of $\pathn$ and $j^p_y$ be the output node index of $\pathn$. Let $J_p$ be the set of indices of the hidden nodes crossed by the path $\pathn$, each of them associated with an index in $A(\theta, x_i)_{j_y}$. We define 
        \begin{align}
            c^i_p = \prod_{j\in J_p}F[i, j]
        \end{align}
        the product of all the activity patterns of the nodes crossed by path $\pathn$ for point $x_i$. The coefficient $c_p^i$ is constant for  $\theta' \in O$ and indicates if the path is active at point $x_i$.\\

        For $\theta'\in O$, let $\PathL(\theta')\in\R(\dimPathL)$ be the vector of paths at parameter $\theta'$, and let $j\in\{1, \cdots, \dimy\}$ be an output index. It follows that
        \begin{align}
            f_{\theta'}(x_i)_{j} = \sum_{p=1}^\dimPathL\PathL(\theta')_p \; c_p^i\mathbf{1}_{j=j_y^p}\;x_i[j^p_x]
        \end{align}
        where $\mathbf{1}_{j=j_y^p}$ is the indicator function that is equal to $1$ if $j=j_y^p$ and $0$ otherwise.

        Let $C_i^j := \begin{pmatrix}
            c^i_1\mathbf{1}_{j=j_y^1}\;x_i[j^1_x]\\
            \vdots\\
            c^i_{\dimPathL}\mathbf{1}_{j=j_y^{\dimPathL}}\;x_i[j^{\dimPathL}_x]
        \end{pmatrix}\in\R(\dimPathL)$ and let $\mathscr{L}$ be the linear function defined by
        \begin{align}
            \forall u\in\R(\dimPathL),\quad \mathscr{L}(u) = \bigg\{\begin{pmatrix}
            \scalarp{C_i^1, u}\\
            \vdots\\
            \scalarp{C_i^\dimy, u}
            \end{pmatrix}\bigg\}_{i=1, \cdots, n}\;.
        \end{align}
    It follows that for any $\theta'\in O$, we have
    \begin{align}
        \{f_{\theta'}(x_i)\}_{i=1}^n =  \mathscr{L}\big(\PathL(\theta')\big) \;.
    \end{align}
    \end{myproof}
\end{proof}
\begin{lemma}\label{proof:phiequalsB}
    \lemphiequalsB
\end{lemma}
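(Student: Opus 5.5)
The plan is to prove \eqref{eq:PathLequalsSK} coordinate by coordinate. First we reduce to the generic case: the set $\{\theta\in\R(\dimTheta) : \exists i,\ \theta_i = 0\}$ is a finite union of hyperplanes, hence Lebesgue-null, hence $\mu$-null since $\mu$ is absolutely continuous. So we may assume every coordinate of $\theta$ is nonzero. Then $\log(|\theta|)\in\R(\dimTheta)$ is well defined, and so is $\PathL(\theta)_j = \prod_{k}\theta_{e_k}$, which is nonzero for every path $j$. Consequently each diagonal entry of $S$ lies in $\{-1,+1\}$, which avoids the ambiguity of $\sign(0)$.

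Next, fix a path index $j$ with crossed edges $e_1,\ldots,e_{d_j}$. By the definition of the skeleton matrix, row $j$ of $\Skeleton$ is the indicator vector of $\{e_1,\ldots,e_{d_j}\}$. These edges are pairwise distinct, because a path in a DAG cannot revisit an edge (that would create a cycle). Therefore
\begin{align*}
\big(\Skeleton\log(|\theta|)\big)_j = \sum_{i=1}^{\dimTheta}\Skeleton_{ji}\log|\theta_i| = \sum_{k=1}^{d_j}\log|\theta_{e_k}| = \log\Big|\prod_{k=1}^{d_j}\theta_{e_k}\Big| = \log|\PathL(\theta)_j| .
\end{align*}
Applying $\exp$ entrywise gives $|\PathL(\theta)_j|$. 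Multiplying by $S_{jj}=\sign(\PathL(\theta)_j)$ returns $\PathL(\theta)_j$. This holds for every $j$, which yields the claimed identity.

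There is no real obstacle here. The only points needing care are the almost-sure reduction, which ensures that $\log$ and the sign are well defined with all entries of $S$ in $\{\pm1\}$, and the distinctness of the edges along a path, which ensures that the $0/1$ entries of the row of $\Skeleton$ count each factor exactly once.
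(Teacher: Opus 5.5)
Your proof is correct and follows essentially the same route as the paper: discard the null set where some $\theta_i=0$, then compute the $j$-th coordinate as $S_{jj}\exp\big(\sum_k \Skeleton_{jk}\log|\theta_k|\big)=\sign(\PathL(\theta)_j)\,|\PathL(\theta)_j|=\PathL(\theta)_j$. Your extra remarks, that the edges along a path are distinct and that $S_{jj}\in\{\pm1\}$ once all coordinates are nonzero, are details the paper leaves implicit.
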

\begin{proof}
Let $\NN$ be a DAG ReLU network, and let $\Skeleton\in\R(\dimPathL, \dimTheta)$ its skeleton matrix and $\PathL(\theta)$ its vector of paths. 

Let $\theta\in\R(\dimTheta)$ be a parameter initialization and without loss of generality, we suppose that $\theta$ has no null parameters (this event being of null Lebesgue measure). 

Let $S = \diag(\sign(\PathL(\theta)))\in\R(\dimPathL, \dimPathL)$ the diagonal matrix with coefficients $S_{ii} = \sign(\PathL(\theta)_i)$.

\begin{myproof}
To prove $\cref{proof:phiequalsB}$, we shall prove the equality at each index of $\PathL(\theta)$. Let $j\in \{1, ..., \dimPathL\}$, we have
\begin{align*}
 \bigg[S\exp\big(\Skeleton\log(|\theta|)\big)\bigg]_j &= \sum_{i=1}^\dimPathL S_{ji}\bigg[\exp\big(\Skeleton\log(|\theta|)\big)\bigg]_{i, 1}\\
    &= S_{jj}\bigg[\exp\big(\Skeleton\log(|\theta|)\big)\bigg]_{j, 1}\\
    &= \sign\big(\PathL(\theta)_j\big)\exp\bigg(\sum_{k=1}^{\dimTheta}\Skeleton_{jk}\log(|\theta_{k}|)\bigg)\;.
\end{align*}
The sum on $k$ ranges through all the parameters of the networks and the coefficient $\Skeleton_{jk}$ is equal to $1$ iff the parameter $k$ is in the expression of the path $j$. It follows that 
\begin{align}
    \sum_{k=1}^{\dimTheta}\Skeleton_{jk}\log\left(|\theta_k|\right) = \log\big(\left|\PathL(\theta)_j\right|\big)\label{eq:expBlog=phi}
\end{align}
and 
\begin{align*}
 \bigg[S\exp(\Skeleton\log(|\theta|))\bigg]_j &= \sign\big(\PathL(\theta)_j\big)\left|\PathL(\theta)_j\right| = \PathL(\theta)_j\;.
\end{align*}
\end{myproof}
\end{proof}
\begin{proposition}\label{proof:PathLJacobian}
    \propPathLJacobian
\end{proposition}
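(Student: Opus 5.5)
We prove the identity $\partial_\theta\PathL = \diag(\PathL)\Skeleton\diag(\frac{1}{\theta})$ entrywise. Since the excluded set $\{\theta : \exists i,\ \theta_i = 0\}$ is $\mu$-null, we fix $\theta$ with no null coordinate. The paper advertises a proof by first-order expansion, and the natural route is to use \cref{lem:phiequalsB}. On the open orthant containing $\theta$, the signs of all coordinates of $\theta$ are constant. Hence the signs of every $\PathL(\theta')_j$ are constant, because $\PathL(\theta')_j$ is a product of coordinates of $\theta'$. So the sign matrix $S$ is constant on a neighbourhood of $\theta$. On that neighbourhood, \cref{eq:PathLequalsSK} gives $\PathL(\theta') = S\exp\big(\Skeleton\log(|\theta'|)\big)$, a composition of smooth maps.

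Next we write the first-order expansion. For a small perturbation $\delta$, we have $\log|\theta_k+\delta_k| = \log|\theta_k| + \delta_k/\theta_k + o(\delta)$, valid since $\theta_k\neq 0$ and $|\theta_k+\delta_k|=(\theta_k+\delta_k)\sign(\theta_k)$ near $\theta$. Thus $\Skeleton\log|\theta+\delta| = \Skeleton\log|\theta| + \Skeleton\diag(\frac1\theta)\delta + o(\delta)$. Applying the entrywise exponential and the expansion $\exp(a+\epsilon)=\exp(a)(1+\epsilon+o(\epsilon))$ gives
\begin{align*}
\PathL(\theta+\delta) = \PathL(\theta) + S\,\diag\big(\exp(\Skeleton\log|\theta|)\big)\Skeleton\diag(\tfrac{1}{\theta})\delta + o(\delta).
\end{align*}
Finally, $S\,\diag(\exp(\Skeleton\log|\theta|)) = \diag\big(S\exp(\Skeleton\log|\theta|)\big) = \diag(\PathL(\theta))$, since $S$ is diagonal and again by \cref{lem:phiequalsB}. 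Identifying the linear term yields the claimed Jacobian.

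As a sanity check and alternative route, a direct computation works too. We have $\partial_{\theta_k}\prod_{e\in p_j}\theta_e = \Skeleton_{jk}\prod_{e\in p_j, e\neq k}\theta_e = \Skeleton_{jk}\PathL(\theta)_j/\theta_k$. This uses the fact that each edge appears at most once in a path of a DAG, so $\PathL_j$ is multilinear. The only delicate points are the local constancy of $S$ and the requirement $\theta_k\neq 0$, both handled by the almost-sure restriction. We expect no real obstacle beyond stating these carefully.
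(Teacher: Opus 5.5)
Your proposal is correct and follows essentially the same route as the paper: both write $\PathL(\theta+h)_j = S_{jj}\exp\big(\Skeleton_j\log|\theta+h|\big)$ via \cref{lem:phiequalsB}, expand the logarithm and then the exponential to first order, and identify the linear term as $\diag(\PathL)\Skeleton\diag(\frac{1}{\theta})$. You justify explicitly that $S$ is locally constant on the open orthant containing $\theta$, which the paper leaves implicit behind its condition $1+h_k/\theta_k>0$; your direct multilinear computation is also a valid shortcut that avoids the lemma altogether.
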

\begin{proof}
We prove the proposition by performing a first-order Taylor development of $\PathL$ on $\theta$ on a neighborhood $\mathcal{V}$.

Let $\theta\in\R(\dimTheta)$ be a parameter that has no null coordinates, and let $\mathcal{V}$ be a neighborhood of $\theta$ such that for all $\theta'\in\mathcal{V}$, $\theta'$ has no null coordinates.
\begin{myproof}
\textbf{First order Taylor development of $\PathL$ on $\mathcal{V}$.} We use the row of the skeleton matrix $\Skeleton = \left(\Skeleton_1, ..., \Skeleton_{\dimPathL}\right)^T$ where $\Skeleton_j\in\R(\dimPathL)$.
Let $h \in\R(\dimTheta)$ such that $\theta + h\in\mathcal{V}$,  starting from the expression of $\PathL$ provided \cref{proof:phiequalsB}, we have
\begin{align}
 \PathL(\theta + h)_j &= \bigg[S\exp\big(\Skeleton\log(|\theta + h|)\big)\bigg]_j\\
     &= S_{j,j}\exp\big(\Skeleton_j\log(|\theta + h|)\big)\\
    &=S_{j,j}\exp\bigg(\Skeleton_j\log(|\theta|) + \Skeleton_j\log( |\mathbf{1}_\dimTheta + \diag\big(\frac{1}{\theta}\big)h|)\bigg)\label{eq:TaylorExpansionabs} 
\end{align}
where $\mathbf{1}_\dimTheta\in\R(\dimTheta)$ is the vector full of ones and $\diag(\frac{1}{\theta})h =\begin{pmatrix}
\frac{h_1}{\theta_1}\\ .\\ \frac{h_\dimTheta}{\theta_\dimTheta}
\end{pmatrix}$.
We remark that, because $\theta +h\in\mathcal{V}$ we have for $k =1, ..., \dimTheta$,  $0 <|h_k| < |\theta_k| < 1 $ and thus
\begin{align}
   0 <  1 + \frac{h_k}{\theta_k} \;.
\end{align}
It follows that $|\mathbf{1}_\dimTheta+\diag(\frac{1}{\theta})h| = \mathbf{1}_\dimTheta + \diag(\frac{1}{\theta})h$; thus, we can remove the absolute value in the exponential of \cref{eq:TaylorExpansionabs}. We continue the Taylor development from \cref{eq:TaylorExpansionabs}.
\begin{align*}
 \PathL(\theta + h)_j&=S_{j, j}\exp\left(\Skeleton_j\log(|\theta|) + \Skeleton_j\diag(\frac{1}{\theta})h + o(\norm{h})\right)\\
    &=S_{j, j}\exp\left(\Skeleton_j\log(|\theta|)\right) \exp\left(\Skeleton_j\diag(\frac{1}{\theta})h + o(\norm{h})\right)\\
    &=S_{j, j}\exp\bigg(\Skeleton_j\log(|\theta|) \bigg)\left( 1 +\Skeleton_j\diag(\frac{1}{\theta})h + o(\norm{h})\right) \;.
\end{align*}
Using again \cref{proof:phiequalsB}, i.e  $S_{j, j}\exp\left(\Skeleton_j\log(|\theta|)\right) = \PathL(\theta)_j$, it follows that
\begin{align*}
 \PathL(\theta + h)_j &=  \PathL(\theta)_j + \PathL(\theta)_j\Skeleton_j\diag(\frac{1}{\theta})h + o(\norm{h})\;.
\end{align*}
By identification of the linear term in $h$, it follows that the $j^{th}$ row of $\partial_\theta \PathL$ is equal to $\big[\partial_\theta \PathL\big]_j = \PathL(\theta)_j\Skeleton_j\diag(\frac{1}{\theta})\in\R(1, \dimTheta)$ and thus
\begin{align*}
    \partial_\theta\PathL&=\diag(\PathL)\Skeleton\diag(\frac{1}{\theta}) \; .
\end{align*}
\end{myproof}

\end{proof}

\begin{corollary}\label{proof:rkPathLSkeleton}
    \corrkPathLSkeleton
\end{corollary}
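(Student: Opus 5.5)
The plan is to read off the rank equality directly from the factorization of \cref{prop:PathLJacobian}, namely $\partial_\theta\PathL = \diag(\PathL)\,\Skeleton\,\diag(\frac{1}{\theta})$, which holds $\mu$ almost surely. Once both diagonal factors are shown to be invertible, $\partial_\theta\PathL$ is $\Skeleton$ multiplied on the left and on the right by invertible matrices. The rank is then preserved: $\rk(MAN)=\rk(A)$ whenever $M$ and $N$ are invertible, since multiplying by an invertible matrix is a bijection on the column space or row space.

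First I restrict to the event $\{\theta : \theta_i \neq 0 \ \forall i\}$. Its complement is a finite union of hyperplanes $\{\theta_i = 0\}$, so it is Lebesgue-null and hence $\mu$-null, because $\mu$ is absolutely continuous. On this event two things hold.
\begin{itemize}
    \item The vector $\frac{1}{\theta}$ is well defined with no zero entries, so $\diag(\frac{1}{\theta})\in\R(\dimTheta,\dimTheta)$ is invertible.
    \item Each coordinate $\PathL(\theta)_j=\prod_k \theta_{e_k}$ is a product of nonzero reals, hence nonzero, so $\diag(\PathL(\theta))\in\R(\dimPathL,\dimPathL)$ is invertible.
\end{itemize}
The same event is the one on which \cref{prop:PathLJacobian} was established, so the factorization is available there.

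Combining these facts, $\mu$ almost surely $\rk(\partial_\theta\PathL)=\rk\big(\diag(\PathL)\Skeleton\diag(\frac{1}{\theta})\big)=\rk(\Skeleton)$. There is no real obstacle. The only point needing care is to intersect the almost-sure events correctly: the event where the proposition holds and the event where the diagonal factors are invertible are both contained in, or equal to, the complement of the coordinate hyperplanes. That set has full $\mu$-measure, so their intersection still has full measure.
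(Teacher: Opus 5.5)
Your proposal is correct and follows the paper's own proof. Both use the factorization $\partial_\theta\PathL = \diag(\PathL)\Skeleton\diag(\frac{1}{\theta})$ and note that both diagonal factors are invertible off the null set where some $\theta_i=0$, so the rank of $\partial_\theta\PathL$ equals that of $\Skeleton$. You are slightly more explicit than the paper on two points: why each coordinate of $\PathL(\theta)$ is nonzero, and why the exceptional set is $\mu$-null.
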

\begin{proof}
For $\mu$ a measure on $\theta\in\R(\dimTheta$), using \cref{proof:PathLJacobian} we have $\mu$ almost surely :
\begin{align*}
    \partial\PathL = \diag(\PathL)\Skeleton\diag(\frac{1}{\theta})
\end{align*}
As all the coordinates of $\theta$ are different from $0$ $\mu-$ almost surely, thus $\mu-$ almost surely the matrix $\diag(\frac{1}{\theta})$ and $\diag\left(\PathL\right)$ are invertible  and we have $\rk(\partial_\theta\PathL) = \rk(\Skeleton)$.\\
\end{proof}

%% file: Rank_Skeleton.tex
\begin{theorem}\label{proof:rkSkeleton}
    \thrkSkeleton
\end{theorem}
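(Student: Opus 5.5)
We prove $\rk(\Skeleton) = \#E - \#H$ by induction on $h = \#H$, following the sketch of \cref{sec:sketch}. Throughout, we fix a topological sort of the nodes and use only invertible row and column operations, together with row and column permutations, all of which preserve the rank.

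\textbf{Base case $h=0$.} Every node is an input or an output. Any path of length $l\geq 2$ therefore stops at an intermediate node $k$ that is an output. By \cref{lem:H0SK}, the truncated sequence $\edge_1,\ldots,\edge_{l-1}$ is itself a path $\tilde\pathn$. We run \cref{algo:rowoperationSkeletonSK}, processing lengths in decreasing order so that every row subtracted is still an original row. Each resulting row then has a single nonzero entry, located at the column of the last edge of the path.

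Every edge $\edge$ is the last edge of some path. Indeed, take any path from an input to the source of $\edge$, which exists by acyclicity (walk backwards to a node with no incoming edge). The target of $\edge$ is an output, since there are no hidden nodes. Hence every column carries at least one pivot, and all rows are standard basis vectors. This gives $\rk(\Skeleton) = \#E$.

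\textbf{Induction step.} Let $u$ be the last hidden node in the topological sort. Every node after $u$ is then an input or an output, and by the ordering no input can come after $u$ with an edge from $u$. Let $k$ be the first node after $u$ joined to $u$ by an edge $\edge^\star = (u,k)$; the node $k$ is an output.

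\emph{Column operation.} For every path through $u$, the column of $\edge^\star$ equals the sum of the $E^u_+\cup\{\edge^\star\}$ columns minus $\ldots$ Concretely, we use the identity
\[
\Skeleton[:,\edge^\star] = \sum_{\edge\in E^u_-}\Skeleton[:,\edge] - \sum_{\edge\in E^u_+}\Skeleton[:,\edge].
\]
This holds because each path through $u$ enters by exactly one edge of $E^u_-$ and leaves by exactly one outgoing edge, since $u$ is hidden and therefore cannot be an endpoint. Paths not through $u$ contribute zero to both sides. Replacing column $\edge^\star$ by itself minus the right-hand side zeroes it out without changing the rank; this corresponds to the first operation of step 2.

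\emph{Row operations.} The remaining rows describe paths through $\edge^\star$ that continue past $k$, together with other paths continuing past $k$. We apply \cref{lem:ppcoincideSK} and \cref{algo:rowoperationSkeletonReccurenceSK}, which subtracts the truncated path. The row of a path that crosses $\edge^\star$ and ends exactly at $k$ becomes the row of the path ending at $u$ in the modified network, with $\edge^\star$ dropped.

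\emph{Removing edges after $k$.} For each edge leaving $k$ or a later output, we isolate a row whose only nonzero entry lies in that edge's column. We use it to clear the column elsewhere, then delete the pair consisting of that row and that column.

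\textbf{Concluding the count.} The matrix that remains is the skeleton $F$ of the network $\NN'$ obtained from $\NN$ by deleting $\edge^\star$ and turning $u$ into an output. This network has $h-1$ hidden nodes, and $\rk(\Skeleton) = \rk(F) + 1$, where the $+1$ accounts for the rank carried by the deleted pairs beyond the one edge removed from the count. By the induction hypothesis, $\rk(F) = (\#E-1) - (h-1)$, so
\[
\rk(\Skeleton) = \rk(F) + 1 = \#E - h.
\]

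\textbf{Main obstacle.} The delicate point is this last bookkeeping step. One must show that after the column operation and \cref{algo:rowoperationSkeletonReccurenceSK}, the matrix is exactly a block matrix of the form $\begin{pmatrix}F & \ast\\ 0 & \text{pivots}\end{pmatrix}$, after discarding zero rows. In particular, the rows indexed by paths of $\NN$ that crossed $\edge^\star$ and continued must correspond bijectively to rows that either duplicate rows of $F$, are zero, or are pivots for the edges after $k$, with the pivot count matching exactly. I would handle this by separating the paths into those ending at $k$, those extending beyond $k$ (handled by downward induction on length), and those avoiding $u$. I would first verify that the path counts agree in the example of \cref{fig:1HiddenNodeSketchOfProof}, and then generalize.
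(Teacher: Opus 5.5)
Your outline is the paper's: last hidden node $u$, first successor $k$, zeroing the column of $\edge^\star$ with the $E^u_\pm$ columns, truncation row operations, deleting isolated row/column pairs, then induction. Your base case and column identity are correct. The induction step does not close, though, because the final bookkeeping, which you yourself leave open, is wrong.

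After you delete a row/column pair for each isolated edge beyond $k$, the remaining matrix has no columns for those edges. So it cannot be the skeleton $F$ of $\NN'$ ($\NN$ without $\edge^\star$, with $u$ made an output), which still contains them. The count also fails: for that $F$ the induction hypothesis gives $\rk(F)=(\#E-1)-(h-1)=\#E-h$, so $\rk(\Skeleton)=\rk(F)+1$ would give $\#E-h+1$. In the sketch's one-hidden-node example, $\NN'$ has edges $1,2,3,4,6$, no hidden node, and skeleton rank $5=\rk(\Skeleton)$, so there $\rk(\Skeleton)=\rk(F)$. The $+1$ in the sketch counts the one deleted edge $6$. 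In general each deleted pair contributes exactly one, so deleting $m$ pairs gives $\rk(\Skeleton)=m+\rk(F_{\mathrm{red}})$, where $F_{\mathrm{red}}$ is the skeleton of the network with $\edge^\star$ \emph{and} those $m$ edges removed. A second problem: the truncation algorithm only yields single-entry rows for last edges of paths that continue past $k$, i.e.\ edges reachable from $k$. An edge between two later outputs that is not reachable from $k$ gets no such row, so ``for each edge leaving $k$ or a later output'' is unjustified.

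Either of two fixes works.
\begin{itemize}
\item \emph{As in the paper.} Let $E^{end}$ be the set of last edges of paths in $\Paths$, clear those columns, and identify the rest (up to zero and repeated rows) with the skeleton of the network without $\edge^\star$ and $E^{end}$, with $u$ an output. That network has $\#E-1-\#E^{end}$ edges and $h-1$ hidden nodes, so $\rk(\Skeleton)=\#E^{end}+(\#E-1-\#E^{end})-(h-1)=\#E-h$.
\item \emph{Delete nothing after zeroing column $\edge^\star$.} Write $\chi(\cdot)$ for the indicator row of an edge set. Rows of paths outside $\Paths$ are skeleton rows of $\NN'$; a path through $\edge^\star$ ending at $k$ becomes the path ending at $u$. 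A path of $\Paths$ with prefix $a$ up to $u$ and suffix $s$ after $k$ now has row $\chi(a)+\chi(s)$. Here $\chi(a)$ is already a row. If $E^k_-\neq\emptyset$, then $\chi(s)=\chi(r\cup s)-\chi(r)$ for any path $r$ entering $k$ through $E^k_-$, and both $r$ and $r\cup s$ are paths of $\NN'$. If $E^k_-=\emptyset$, then $k$ becomes an input of $\NN'$ and the suffixes $s$ are exactly its extra paths. In both cases the row spaces agree, so $\rk(\Skeleton)=\rk(F)=(\#E-1)-(h-1)=\#E-h$, with no correction term.
\end{itemize}
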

We prove \cref{th:rkSkeleton} by induction on the number of hidden neurons  in the graph $\Graph$ of the network $\NN$ which we denote by $h = \#H$.
\begin{proof}
Let $\NN$ be a neural network with $h$ hidden nodes and let $\Graph$ be its DAG. Let $N = \{n_1, \cdots, n_m \}$ be its set of nodes, $E=\{e_1, \cdots, e_\dimTheta\}$ its set of edges, and $\Skeleton$ its skeleton matrix.
\begin{myproof}
\textbf{$\mathbf{h=0}$} : 
An example of such a graph is shown in \cref{fig:NoHiddenNode_rkPhi}. The nodes are indexed with integers and the edges with letters.\\
\begin{minipage}{0.35\textwidth}
    To compute the rank of the skeleton matrix $\Skeleton$ of this network, we apply invertible row operations on it, then evaluate the rank of the resulting matrix. Indeed, performing row operations on a matrix is equivalent to applying invertible matrices on its left side, and the resulting matrix after such operations is of the same rank as the original matrix.
    We choose the row operations that take advantage of some specific properties of the skeleton matrix $\Skeleton$ and which are embodied by the following lemma.
\end{minipage}
\hfill
\begin{minipage}{0.6\textwidth}
    \begin{figurebox}
    \centering
    \includegraphics[width = 0.6\textwidth]{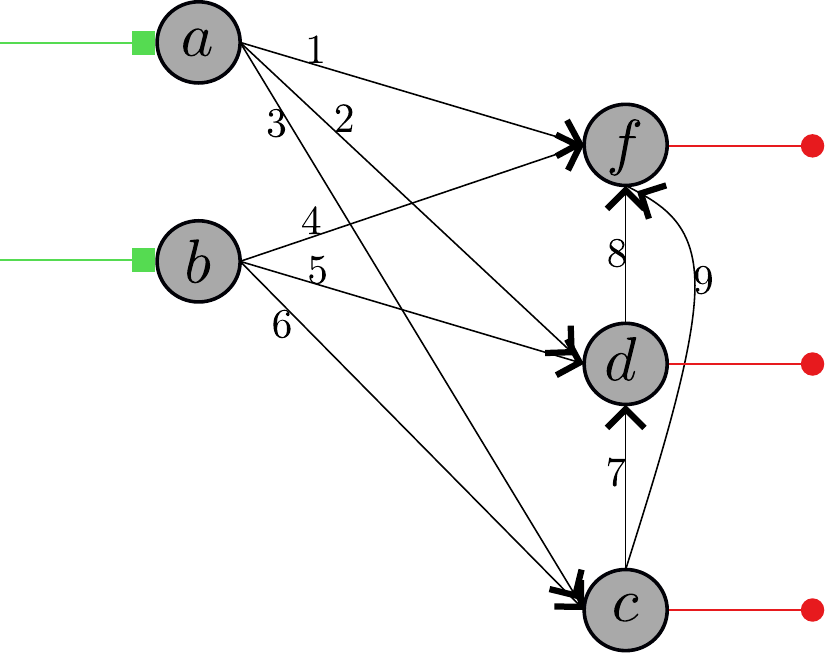}
    \captionof{figure}{Example of a DAG of a network without a hidden node: the green and the red arrows represent the input nodes and the output nodes, respectively. The nodes are indexed with letters $\Nodes = \{n_a, n_b, n_c, n_d, n_f\}$ and the edges with integers $\Edges = \{e_1, e_2, e_3, e_4, e_5, e_6, e_7, e_8, e_9\}$.}
    \label{fig:NoHiddenNode_rkPhi}
    \end{figurebox}
\end{minipage}\\
\vspace{0.5cm}\\
    \begin{lemma}\label{proof:lemH0}
        \lemBminusBHzero
    \end{lemma}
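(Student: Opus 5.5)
The plan is to read the statement directly off the definitions of input, output and hidden nodes, using the hypothesis $H=\emptyset$.

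\textbf{Structure of a path when $H=\emptyset$.} Let $\pathn$ be a path of length $l\geq 2$, visiting nodes $n_0, n_1, \ldots, n_l$ in order, with $\edge_i$ going from $n_{i-1}$ to $n_i$. By definition $n_0\in I$ and $n_l\in O$. Each intermediate node $n_i$ with $1\leq i\leq l-1$ has the incoming edge $\edge_i$. Since input nodes are exactly the nodes with no incoming edges, $n_i\notin I$. As $I, H, O$ partition $\Nodes$ and $H=\emptyset$, we get $n_i\in O$. In particular $n_{l-1}\in O$.

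\textbf{Construction of $\tilde\pathn$.} Define $\tilde{\pathn}$ as the sequence $n_0,\ldots,n_{l-1}$ with edges $\edge_1,\ldots,\edge_{l-1}$. It starts at an input node and ends at the output node $n_{l-1}$. It is a valid directed path because it is a prefix of $\pathn$, and acyclicity guarantees its nodes and edges are distinct. Hence $\tilde{\pathn}$ is one of the $\dimPathL$ indexed paths, of length $l-1$.

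\textbf{The row identity \eqref{eq:symH0SK}.} By definition, $\Skeleton[\pathn,:]$ is the indicator vector of the edge set $\{\edge_1,\ldots,\edge_l\}$ and $\Skeleton[\tilde\pathn,:]$ is the indicator of $\{\edge_1,\ldots,\edge_{l-1}\}$. Since the graph is a DAG, a path never repeats an edge, so $\edge_l\notin\{\edge_1,\ldots,\edge_{l-1}\}$. The difference of the two rows is therefore the indicator of $\edge_l$, which is the claimed vector.

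\textbf{Expected obstacle.} The only delicate point is the convention that output nodes may have outgoing edges. The argument depends on the paper's definition that a path ends at \emph{any} node designated as an output, not only at sinks, so that the truncated prefix $\tilde\pathn$ really is counted among the rows of $\Skeleton$. I would state this convention explicitly before the construction.
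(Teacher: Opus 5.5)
Your proposal is correct and follows essentially the same route as the paper's proof: with $H=\emptyset$, every node after the first on the path is an output node, so the prefix through $\edge_1,\ldots,\edge_{l-1}$ is itself one of the indexed paths. You also spell out two points the paper leaves implicit — that intermediate nodes cannot be inputs because they have an incoming edge, and that the row difference is exactly the indicator of $\edge_l$ because a path in a DAG never repeats an edge.
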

    \begin{myproof}    
    \begin{proof}
    
Let $l > 1$ and $\pathn$ of length $l$ going through the ordered list of edges $\edge_1, \ldots, \edge_l$. Because the graph is a-cyclic and has no hidden node, the edge $\edge_1$ connects an input node to an output node, and for all $j = 2, \ldots, l$ the edge $\edge_j$ connects two output nodes. As a consequence, the edge $\edge_{l-1}$ is directed to an output node and thus the path going through the edges $\edge_1, ..., \edge_{l-1}$ ending at $e_{l-1}$ exists.
    \end{proof}
    \end{myproof}
    
    Using this symmetry, we construct the algorithm \cref{algo:rowoperationSkeleton} which takes as input the matrix $\Skeleton$ and outputs a matrix $\tilde{\Skeleton}$ that is of the same rank as $\Skeleton$. 

\noindent\rule[7pt]{\linewidth}{0.4pt}
    \normalem 
    \begin{algorithm}[H]
     \KwData{$\Skeleton$ the skeleton matrix, $l_{max}$ the maximum path length in $\Graph$}
     \KwResult{matrix $\tilde{\Skeleton}$ of the same rank as $\Skeleton$}
     \For{$l = l_{max}, l_{max} - 1, \ldots, 2$}{
      \For{$\pathn$ of length $l$}{
      let $\tilde{\pathn}$ be the path defined in \cref{proof:lemH0} for path $\pathn$\;
      $\Skeleton[\pathn, :] \xleftarrow{} \Skeleton[\pathn, :] - \Skeleton[\tilde{\pathn}, :]$\;
      }
     }
     \caption{Row operations on matrix $\Skeleton$}
     \label{algo:rowoperationSkeleton}
    \end{algorithm}
\noindent\rule[7pt]{\linewidth}{0.4pt}
    
    The matrix $\tilde{\Skeleton}$ outputted by \cref{algo:rowoperationSkeleton} is of the same rank as $\Skeleton$ and each of its rows has a unique non-null element corresponding to the last edge crossed by the paths associated with that row (\cref{proof:lemH0}). As a consequence the rank of $\tilde{\Skeleton}$ is equal to number of its non-null columns.\\
    
    As all the edges of the networks are directed toward an output, every edges are a last edge of one path, thus there is no null-columns in $\tilde{\Skeleton}$ thus,
   the rank of $\tilde{\Skeleton}$ equals the the number of columns, that is the number of parameters in the network. It follows that
    \begin{align*}
        \rk(\Skeleton) = \rk(\tilde{\Skeleton}) = \dimTheta - 0\;.
    \end{align*}
\end{myproof}
\begin{myproof}
    \textbf{Induction :} We suppose \cref{proof:rkSkeleton} is true for any DAG ReLU network with at most $h-1$ hidden nodes.
    Let $\NN$ be a DAG ReLU network with $h$ hidden nodes. We note $\{\node_1,\ldots, \node_m\}$ and $\{\edge_1, \dots, \edge_\dimTheta\}$ the lists of its nodes and edges.\\
\begin{minipage}{0.5\textwidth}
    Because there is no cycle in the graph of $\NN$, we index its nodes with a topological sort of its graph with the constraint that the input nodes are indexed first.
    
    We choose $u$ as the larger hidden node index in the topological sort of the graph and let $k>u$ be the smaller integer such that node $n_{k}$ is connected to node $n_u$. 
    Let $\edge_{\leftrightarrow}$ be the edge connecting node $\node_u$ and node $\node_{k}$. \uline{The key element of the proof is to remove the edge $e_{\leftrightarrow}$ from the graph of $\NN$ and transform $n_u$ into an output node. Those actions would remove one hidden node in the graph of $\NN$ and would allow the use of the induction hypothesis}. The technique of the proof mainly relies on the fact that the column on the skeleton matrix at edge $\edge_{\leftrightarrow}$ is linearly dependent on the columns associated with the incoming and outgoing edges of node $u$.  
    
    The steps of the proofs are similar to the sketch in \cref{sec:sketch}. First, we re-order the columns and row of the skeleton matrix, then we remove the edge $e_{\leftrightarrow}$, and finally we transform $n_u$ into an output node. In the sketch, those two last steps are done simultaneously.
\end{minipage}
\hfill
\begin{minipage}{0.45\textwidth}
\begin{figurebox}
    \includegraphics[width=0.9\textwidth]{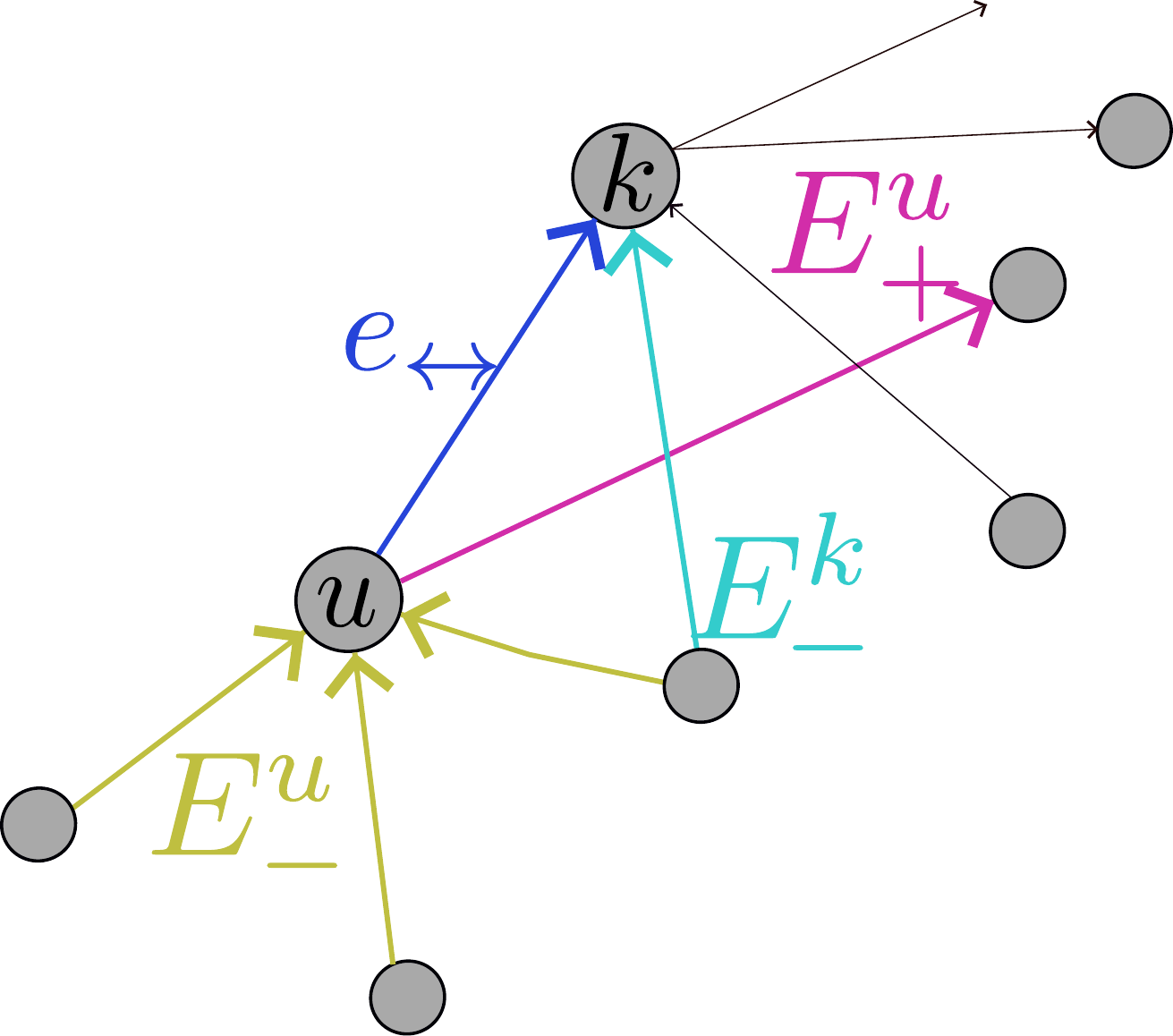}
    \captionof{figure}{DAG of a network with $h$ hidden nodes. The set of edges $\color{colorEum}{E_{-}^u}$, $\textcolor{colorEup}{E_{+}^u}$ and $\textcolor{colorEkm}{E_{-}^k}$ are colored in dark yellow, purple and cyan. The blue edge $\textcolor{colore}{e_\leftrightarrow}$ is the edge connecting the hidden node $n_u$ to the output node $n_k$.}
    \label{fig:HiddenNeuron}
\end{figurebox}
\end{minipage}
\vspace{0.5cm}\\
\textbf{1. Removing the edge $\edge^{\leftrightarrow}$}\\
Note $E_-^{u}$ the set of the in-going edges of $n_u$ and $ E^ {u} _ {+} $ the set of the outgoing edges of $n_u$ but without $\edge^{\leftrightarrow}$ and $E_{-}^k$ the set of in-going edges of the nodes $\node_{k}$ but without $\edge^{\leftrightarrow}$.
Those different sets are indicated with color in \cref{fig:HiddenNeuron}. 
We now present the symmetries of $\Skeleton$ with the following lemmas :
\begin{lemma}\label{lem:UniqueCompositionP}
 If path $\pathn$ goes through node $\node_u$ then $\pathn$ passes by a unique edge of $E_{-}^u$ and a unique edge of $\{\edge^{\leftrightarrow}\}\cup E_+^u$.
\end{lemma}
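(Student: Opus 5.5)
The plan is to read a path as a sequence of consecutive edges, from an input node to an output node, in which each intermediate node is entered by exactly one edge and left by exactly one edge. Since $\node_u$ is a hidden node, it is neither an input nor an output node. Any path $\pathn$ through $\node_u$ therefore neither starts nor ends at $\node_u$: it must enter $\node_u$ through an in-going edge and leave it through an out-going edge.

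For existence, I will argue as follows. Because $\node_u$ is not the first node of $\pathn$, the edge of $\pathn$ immediately preceding $\node_u$ has $\node_u$ as its head, so it lies in $E_-^u$. Because $\node_u$ is not the last node of $\pathn$, the edge immediately following $\node_u$ has $\node_u$ as its tail. By definition of $E_+^u$, such an edge lies in $\{\edge^{\leftrightarrow}\}\cup E_+^u$.

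For uniqueness, I will use acyclicity of $\Graph$. Suppose $\pathn$ contains two distinct edges of $E_-^u$. Then $\pathn$ visits $\node_u$ twice, and the portion of $\pathn$ between the two visits is a directed cycle through $\node_u$, which contradicts the DAG assumption. The same argument applies to two distinct out-going edges of $\node_u$, that is, two edges of $\{\edge^{\leftrightarrow}\}\cup E_+^u$: each forces a separate departure from $\node_u$, so again $\node_u$ would be visited twice. Equivalently, in the topological order node indices strictly increase along $\pathn$, so $\node_u$ appears at most once, and hence $\pathn$ has exactly one incoming and one outgoing edge at $\node_u$.

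The only delicate point is the convention behind the definition of a path. The paper defines a path as a \emph{set} of edges connecting an input to an output node. I need that such a set is the edge set of a directed walk, so that "passes by an edge of $E_-^u$" means the walk enters $\node_u$. I will make this explicit at the start of the proof. I do not expect any other obstacle: the statement is a direct consequence of $\node_u\in H$ and of acyclicity.
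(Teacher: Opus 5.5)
Your proof is correct and takes the same route as the paper, whose one-line proof notes that $E_-^u$ and $\{e^{\leftrightarrow}\}\cup E_+^u$ are exactly the sets of in-going and out-going edges of $n_u$. You also spell out the two facts the paper leaves implicit: $n_u\in H$ forces every path through it to enter and leave $n_u$, and acyclicity forces it to do so exactly once.
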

\begin{myproof}    
    \begin{proof}
 By definition, $E_{-}^u$ is the set of all the incoming edges of $n_u$, and $\{\edge^{\leftrightarrow}\}\cup E_+^u$ is the set of all the outgoing edges of node $n_u$, which concludes the proof.
    \end{proof}
\end{myproof}
\begin{lemma}\label{lem:inEkNotinEh}
 Let $\pathn$ be a path of $\NN$, then $\pathn$ can not simultaneously go through an edge of $E_{-}^u\cup E_+^u\cup\{\edge^{\leftrightarrow}\}$ and an edge of $E_{-}^k$. 
\end{lemma}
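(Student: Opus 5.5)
The plan is a direct contradiction argument using only the topological order and the choice of $\node_u$ and $\node_k$. Throughout I will use three facts:
\begin{itemize}
\item[(i)] Along any path the node indices strictly increase, because the nodes are indexed by a topological sort and every edge goes from a smaller to a larger index.
\item[(ii)] A path visits each node at most once.
\item[(iii)] $\node_k$ is the out-neighbour of $\node_u$ with the smallest index. Any node connected to $\node_u$ with an index larger than $u$ is necessarily an out-neighbour, since in-neighbours have smaller indices. I also take $\edge^{\leftrightarrow}$ to be the unique edge from $\node_u$ to $\node_k$.
\end{itemize}

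\textbf{Step 1: reduce both conditions to statements about nodes.} Suppose $\pathn$ goes through an edge of $E_{-}^u\cup E_+^u\cup\{\edge^{\leftrightarrow}\}$. Every such edge has $\node_u$ as one endpoint, so $\pathn$ visits $\node_u$. Suppose also that $\pathn$ goes through some $\edge\in E_-^k$. Then $\pathn$ visits $\node_k$, and by (ii) it enters $\node_k$ through $\edge$. Since $\edge \neq \edge^{\leftrightarrow}$, the node $\node_v$ preceding $\node_k$ on $\pathn$ is not $\node_u$.

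\textbf{Step 2: order the two visits.} Since $u<k$, fact (i) forces $\pathn$ to visit $\node_u$ before $\node_k$. In particular $\pathn$ does not end at $\node_u$, so it leaves $\node_u$ through some out-edge to a node $\node_w$.

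\textbf{Step 3: derive the contradiction.} First, $w\neq k$. Otherwise $\pathn$ would enter $\node_k$ directly from $\node_u$, which by Step 1 and the uniqueness of $\edge^{\leftrightarrow}$ is excluded. So $\node_w$ is an out-neighbour of $\node_u$ different from $\node_k$, and by the minimality of $k$ in (iii) we get $w>k$. On the other hand, $\node_w$ lies on $\pathn$ strictly between $\node_u$ and $\node_k$; it could equal $\node_v$, but it precedes $\node_k$. Fact (i) then gives $w<k$, a contradiction.

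\textbf{Main obstacle.} The only delicate point is making explicit what ``$k$ is the smallest integer such that $\node_k$ is connected to $\node_u$'' means. It must refer to out-neighbours of $\node_u$, which is automatic for indices above $u$. The argument also uses that there is a single edge $\edge^{\leftrightarrow}$ from $\node_u$ to $\node_k$ (no multi-edges), as implicitly assumed when $\edge^{\leftrightarrow}$ is defined. If multi-edges were allowed, $E_-^k$ would contain a second edge from $\node_u$ to $\node_k$ and the statement would fail, so I would state the simple-graph assumption explicitly.

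Nowhere does the argument need the fact that $\node_k$ is an output node, which follows from $u$ being the last hidden node. That fact matters only for the later lemmas.
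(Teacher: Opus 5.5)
Your proof is correct and follows the paper's own argument: both argue by contradiction that the path must leave $\node_u$ for a node other than $\node_k$ (the paper's $\node_\alpha$, your $\node_w$). By minimality of $k$ that node has index at least $k$, yet since it lies strictly before $\node_k$ on the path its index is below $k$. You also make precise the step the paper loosely attributes to acyclicity, using that a path enters $\node_k$ only once and that $\edge^{\leftrightarrow}$ is the only edge from $\node_u$ to $\node_k$; your remark that the lemma fails if parallel edges are allowed is accurate.
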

\begin{myproof}
    \begin{proof}
        \textbf{reductio ad absurdum}.
    
        We suppose that it exists a path $\pathn$, such that $\pathn$ go through an edge of $E_{-}^u\cup E_{+}^u\cup \{\edge^{\leftrightarrow}\}$ and through an edge of $E_{-}^k$.
                
        Let $\pathn$ be such path. By construction $\pathn$ goes through node $u$ and node $k$. Because $\pathn$ goes through an edge in $E_{-}^k$ and because $\Graph$ has no cycle then path $\pathn$ does not cross the edge $\edge^{\leftrightarrow}$. As a consequence, the path $\pathn$ crosses one intermediate node between $\node_u$ and $\node_k$. Let $\node_u, \node_\alpha, ..., n_{k}$ be the ordered list of nodes associated to sub path of $\pathn$ starting at $\node_u$ and ending at $\node_{k}$. Because we have indexed the node using the topological sort, we have $u < \alpha < k$. On the other hand, $k$ is the smaller integer for which $\node_{k}$ is connected to node $\node_u$, so $k \leq \alpha$, absurd. 
    \end{proof}
\end{myproof}

\begin{minipage}{0.4\textwidth}
    We now re-order the columns of $\Skeleton$ such that it first rows correspond to $e^{\leftrightarrow}$, $E_{+}^u$, $E_{-}^u$ and $E^k_{-}$. We also reorder its lines so that the first lines correspond to all the paths crossing both nodes $\node_{u}$ and $\node_{k}$, then all the paths crossing $\node_{u}$ but not by node $n_{k}$. The skeleton matrix is of the form of the right figure, where the blue zeros are induced by \cref{lem:UniqueCompositionP}, the magenta zeros are induced by \cref{lem:inEkNotinEh}, and the green zeros are induced by \cref{lem:inEkNotinEh}. The colored dots correspond to a repetition of 0 in the matrix. 
\end{minipage}
\hfill
\begin{minipage}{0.55\textwidth}
    \begin{align*}
        \Skeleton &= 
        \begin{matrix}
            \begin{matrix}
                \;\hspace{-0.2cm}\edge_{\leftrightarrow} & & &\!E_+^u& &\hspace{0.4cm}E_-^u &\hspace{0.2cm}E_{-}^k&.&
            \end{matrix}\\
            \OrderSkeleton
        \end{matrix}
    \end{align*}
\end{minipage}
\vspace{0.5cm}\\
Furthermore, as a path crosses at most one edge of $E_{-}^u$ ( \cref{lem:UniqueCompositionP}), the matrices indexed on letter $A$ are of the same form as the first block matrix of $\Skeleton$ with only one non-null element by row as
\begin{align}
    \DiagZOnes \label{eq:AdiagOne}
\end{align}
and the sum of its columns gives the vector full of ones.\\
 
We now perform two types of column operations in the skeleton matrix. First, we sum all the columns associated with the edge $E_{-}^u$ to obtain a vector full of ones on top and full of zeros at its bottom, and subtract it from the first columns of $\Skeleton$. We note $\Skeleton'$ the results of such an operation. Then, we add the columns associated with the set $E_{+}^u$ to the first columns of $\Skeleton'$ and note $\tilde{\Skeleton}$ the result of this operation. The corresponding matrix are 
\begin{align*}
   \hspace{-0.4cm}\Skeleton' = \IntermediatSkeleton,\;\tilde{\Skeleton} = \ModifiedSkeleton
\end{align*}\\
Where the modifications are indicated in red.
We see in the matrix $\tilde{B}$ that the connection $\edge^{\leftrightarrow}$ has been removed in the sense that the column which was previously associated with it is full of zeros.
\vspace{0.5cm}\\
\textbf{2. Transform $n_u$ in an output node}\\
We now transform $\tilde{\Skeleton}$ into a skeleton matrix of a network with $h-1$ hidden neurons. To do so, we should remove in $\tilde{\Skeleton}$ the ones corresponding to the end of the paths crossing the edge $\edge^{\leftrightarrow}$ and not ending ad node $n_k$. This part of the proof is similar to the induction's initialization. 

Let $\Paths$ the set of path crossing $\edge^{\leftrightarrow}$ and not ending at $n_k$. We take advantage of the choice of node $u$ in the topological sort with the following lemma.
\begin{lemma}\label{proof:ppcoincide}
    \lemppcoincideSK
\end{lemma}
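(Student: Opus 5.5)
The plan is to build $\tilde{\pathn}$ by deleting the last edge of $\pathn$, check that the result is a genuine path, and then read off the row identity. For the identity, I will show that the column operations producing $\tilde{\Skeleton}$ leave every column except the one of $\edge^{\leftrightarrow}$ unchanged.

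\textbf{Construction of $\tilde{\pathn}$.} Let $\pathn$ have length $l\geq 2$, with ordered nodes $\node_{a_0},\node_{a_1},\ldots,\node_{a_l}$ and edges $\edge_1,\ldots,\edge_l$, where $\edge_j$ goes from $\node_{a_{j-1}}$ to $\node_{a_j}$. Here $\node_{a_0}\in I$ and $\node_{a_l}\in O$.
\begin{itemize}
\item By hypothesis $\pathn$ crosses $\node_k$, so $k=a_s$ for some $s$, and $s<l$ because $\pathn$ does not end at $\node_k$.
\item Since the indices follow a topological sort, $a_{l-1}\geq a_s=k$.
\item Either $a_{l-1}=k$, and $\node_k$ is an output node by assumption; or $a_{l-1}>k$, and $\node_{a_{l-1}}$ is an output node because every node of index larger than $k$ is an output node.
\item Hence the sequence $\edge_1,\ldots,\edge_{l-1}$ starts at the input node $\node_{a_0}$ and ends at the output node $\node_{a_{l-1}}$. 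It is therefore a path $\tilde{\pathn}$ of the network, of length $l-1\geq 1$, which coincides with $\pathn$ on its first $l-1$ edges (its first $l$ nodes).
\end{itemize}

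\textbf{Row difference in $\Skeleton$.} Because the graph is acyclic, the edges of $\pathn$ are pairwise distinct. So $\edge_l\notin\{\edge_1,\ldots,\edge_{l-1}\}$, and $\Skeleton[\pathn,:]-\Skeleton[\tilde{\pathn},:]$ is exactly the indicator row of $\edge_l$.

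\textbf{Passing to $\tilde{\Skeleton}$.} The statement concerns $\tilde{\Skeleton}$, the matrix obtained after the column operations of step~1. Those operations modify only the column of $\edge^{\leftrightarrow}$, replacing it by
\[
\Skeleton[:,\edge^{\leftrightarrow}]-\sum_{\edge\in E_-^u}\Skeleton[:,\edge]+\sum_{\edge\in E_+^u}\Skeleton[:,\edge].
\]
By \cref{lem:UniqueCompositionP}, this new column is zero on every row:
\begin{itemize}
\item a path through $\node_u$ contributes $+1$ from exactly one edge of $\{\edge^{\leftrightarrow}\}\cup E_+^u$ and $-1$ from exactly one edge of $E_-^u$;
\item a path not through $\node_u$ contributes $0$ everywhere.
\end{itemize}
Therefore $\tilde{\Skeleton}$ agrees with $\Skeleton$ on all columns other than that of $\edge^{\leftrightarrow}$, and that column is identically zero. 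It follows that $\tilde{\Skeleton}[\pathn,:]-\tilde{\Skeleton}[\tilde{\pathn},:]$ equals the $\Skeleton$-difference computed above, with the $\edge^{\leftrightarrow}$ entry set to $0$.

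\textbf{Main obstacle.} The delicate point is making sure this last zeroing does not erase the $1$ at $\edge_l$, i.e. that $\edge_l\neq\edge^{\leftrightarrow}$. This holds because $\edge_l$ starts at $\node_{a_{l-1}}$, whose index is $\geq k>u$, whereas $\edge^{\leftrightarrow}$ starts at the hidden node $\node_u$. Hence the only non-null entry of \cref{eq:symcoincideSK2} is the $1$ at the column of $\edge_l$, the last edge crossed by $\pathn$, as claimed.
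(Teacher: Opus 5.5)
Your proof is correct and follows the paper's route: truncate the last edge exactly as in \cref{proof:lemH0}, then observe that $\tilde{\Skeleton}$ differs from $\Skeleton$ only in the column of $\edge^{\leftrightarrow}$, which is identically zero. Your version fills in points the paper's one-line argument leaves implicit: you justify that the penultimate node is an output via the topological-sort hypothesis (rather than the absence of hidden nodes), and you check that $\edge_l\neq\edge^{\leftrightarrow}$, which is what guarantees that zeroing that column does not erase the surviving $1$.
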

\begin{myproof}
    \begin{proof}
        The proof is the same as the proof of \cref{proof:lemH0} the only difference is that we should consider matrix $\tilde{\Skeleton}$. Considering the writting of $\tilde{\Skeleton}$, we see that the only difference is that the column associated with the edge $\edge^{\leftrightarrow}$ is full of zeros, thus one can replace $\Skeleton$ by $\tilde{\Skeleton}$ and the result still holds.
       \end{proof}
\end{myproof}
We now perform row operations on $\tilde{\Skeleton}$ with \cref{algo:rowoperationSkeletonReccurence}. \\

\noindent\rule[7pt]{\linewidth}{0.4pt}
    \normalem 
    \begin{algorithm}[H]
     \KwData{$\tilde{\Skeleton}$ the modified skeleton matrix, $\Paths$ the list of path crossing edge $\edge_{\leftrightarrow}$ and not ending at node $n_k$, $l_{min}, l_{max}$ the minimum and maximum length of a path in $\Paths$.}
     \KwResult{matrix $\tilde{\Skeleton}'$ of the same rank as $\tilde{\Skeleton}$}
     \For{$l\in\{ l_{max}, l_{max} -1,  \cdots, l_{min}\}$}{
     \For{$p\in \Paths$ and $p$ of length $l$}{
      Let $\tilde{\pathn}$ be the  path of \cref{proof:ppcoincide} coinciding with $\pathn$ on its $l-1$ first nodes.
      $\tilde{\Skeleton}[\pathn, :]\leftarrow \tilde{\Skeleton}[\pathn, :] - \tilde{\Skeleton}[\tilde{\pathn}, :] $
      }
      }
     \caption{Row operations on matrix $\tilde{\Skeleton}$}
     \label{algo:rowoperationSkeletonReccurence}
    \end{algorithm}
\noindent\rule[7pt]{\linewidth}{0.4pt}
The matrix $\tilde{\Skeleton}'$ outputted by \cref{algo:rowoperationSkeletonReccurence} is of the same rank as $\tilde{\Skeleton}$. Let $E^{end}$ be the set of edges corresponding to the last edge of a path in $\Paths$. For each edge $e$ in $E^{end}$, there exists a row in $\tilde{\Skeleton}'$ such that the only non-null element of the row is at edge $e$ (\cref{proof:ppcoincide,algo:rowoperationSkeletonReccurence}). Using this property, we remove in $\tilde{\Skeleton}'$ the dependency of all edge $e$ in $E^{end}$ for the paths not in $\mathcal{P}$. We perform such operations and obtain the matrix $F$. Ordering its columns starting with the set of edges $E^{end}$, and $\edge_{\leftrightarrow}$, this matrix is of the form 
\begin{align*}
    F = \begin{matrix}
        \begin{matrix}
            E^{end}&\edge_{\leftrightarrow}&
        \end{matrix}
        \\
        \begin{pmatrix}
       & A & &\mathbf{0} &\mathbf{0} \\
       & \mathbf{0} & &\mathbf{0} &H\\
       & \mathbf{0} & &\mathbf{0} & D
        \end{pmatrix}
    \end{matrix}
\end{align*}
where the rows $\begin{pmatrix} A & \mathbf{0} &\mathbf{0} \end{pmatrix}$ corresponds to the rows with only one non-null element by row each of them corresponding to an edge in $E^{end}$; where the rows $\begin{pmatrix} \mathbf{0} & \mathbf{0} & H \end{pmatrix}$ corresponds to the paths ending at node $n_u$, and the rows $\begin{pmatrix} \mathbf{0} & \mathbf{0} & D\end{pmatrix}$ correspond to all the path that are not ending at node $n_u$ or not crossing node $n_u$ and whose dependency on the edges in $E^{end}$ has been removed using the rows of $A$.
It follows that
\begin{align*}
    \rk(\tilde{\Skeleton}') = \rk(F) = \rk(A) + \rk(\begin{pmatrix}H\\D\end{pmatrix}) \;.
\end{align*}
Remark that $\rk(A) = \#E^{end}$ and that the matrix $\begin{pmatrix} H \\D \end{pmatrix}$ coincide with the skeleton matrix of a network with $h-1$ hidden nodes and  with $\dimTheta - 1 -\#E^{end}$ parameters up to null and repeated rows, which do not affect the rank.
Using the induction on the matrix $\begin{pmatrix} H \\D \end{pmatrix}$ it follows that
\begin{align*}
    \rk(\Skeleton)=\rk(\tilde{\Skeleton}') &= \#E^{end} + \dimTheta - \#E^{end} - 1 - (h-1) \\
    &= \dimTheta - h\; .
\end{align*}
Remark that the reduced graph associated to the matrix $\begin{pmatrix} H \\D \end{pmatrix}$ may contain nodes that lose all their incoming edges (when the edges in $E^{end}$ are deleted) and would then be re-classified as input nodes by Definition of $I$, however this does not affect the rank of matrix $F$.
\end{myproof}

\end{proof}

%% file: DetailsOnExperiement.tex
\section{Experiment and python module}\label{sec:suppexperiment}
\subsection{Experiment settings}
The experiment has been run on the Intel Xeon Gold 6226R CPU. The used precision is \texttt{float32}, and the main Python library is \texttt{pytorch} version 2.6.0. 

\subsection{Python module}
The Python code is available at this \href{https://gitlab.inria.fr/mverbock/pathliftingandskeleton/-/tree/main?ref_type=heads}{gitLab}. The two main implementations of the module are the construction of the pathlifting and the skeleton matrix for arbitrary feed-forward networks. Both functions are implemented in the Python file \texttt{PythonFiles/PathLiftingAndSkeleton.py}.
\begin{itemize}
    \item \textbf{The pathlifting} is constructed as a tensor object of dimension $(\dimy, \dimPathL_1)$, where $\dimPathL_1$ is the number of paths going to one output node. Its construction is done following the pseudo code \ref{algo:PathLifting}. 
    \item \textbf{The skeleton} is constructed as a tensor object of dimension $(\dimy, \dimPathL_1, \dimTheta)$ where $\dimTheta$ is the number of parameters in the network. Its construction is done following the pseudo code \ref{algo:SkeletonMatrix}. 
\end{itemize}
With this tensor form of $\PathL$ and $\Skeleton$, the Jacobian $\partial_\theta\PathL(\theta)$ can be computed the modifyed version of \cref{algo:JacobianSkeleton} that is as follows.

\begin{minipage}{\textwidth}
    \noindent\rule[7pt]{\linewidth}{0.4pt}
        \normalem 
        \begin{algorithm}[H]
        \KwData{$\theta$ the current parameters.$\phi :=$ the pathlifting at point $\theta$, $\Skeleton$ the skeleton matrix}
        \KwResult{$J$ the Jacobian of $\phi$ at $\theta$}
        $u \leftarrow \frac{1}{\theta}$\;
        $J = \texttt{torch.einsum}('\;q\dimPathL_1, q\dimPathL_1\dimTheta, \dimTheta->q\dimPathL_1\dimTheta\;\;', \phi, \Skeleton, u)$\;
        \caption{Jacobian of $\PathL(\theta)$ with  $\Skeleton$ as tensor}
        \label{algo:JacobianSkeleton}
        \end{algorithm}
    \noindent\rule[7pt]{\linewidth}{0.4pt}
\end{minipage}

Both functions use the auxiliary function \texttt{ListOfPathsAsNodesAtLayer} described in \cref{algo:ListOfPathAsNodesAtLayer}, in which the Cartesian product is computed using the itertools Python library. 

We also remark that both implementations can be enhanced using recursive approaches.
The dimensions are summarized in \cref{tab:objects_dimensions_suppExpe}.
\begin{table}[h!]
    \begin{center}
        \caption{Dimensions}
        \begin{tabular}{|c|l|}
            \hline
            Dimension   & Description \\
            \hline
            $\dimy$&number of output nodes \\
            $L$& depth of the network\\
            $\dimTheta $& dimension of the network parameter\\
            $\dimPathL_1$& number of paths ending at one output node\\
            $(\dimy, \dimPathL_1)$& dimension of the pathlifting as a tensor\\
            $(\dimy, \dimPathL_1, \dimTheta)$& dimension of the skeleton as a tensor\\
            \hline
        \end{tabular}
        \label{tab:objects_dimensions_suppExpe}
    \end{center}
\end{table}

\begin{algorithm}[H]
\DontPrintSemicolon
\caption{ListOfPathAsNodesAtLayer}

 \noindent\rule[7pt]{\linewidth}{0.4pt}
\label{algo:ListOfPathAsNodesAtLayer}

\KwData{$LayersList$: list of ordered network layers, $l_0$ a layer index, $IsBias$:Boolean indicating if the starting node is a bias node}
\KwResult{All possible paths (represented as lists of nodes) starting from layer $l_0$}

$SelectedLayers \gets \emptyset$\;
\If{$l_0 = 1$ and $not(IsBias)$}{
    \tcp{For bias node, we do not consider the starting node of the path}
    $SelectedLayers \gets \big[\{\,n \mid n \in LayersList[1].in\_features\,\}$\big]\;
}

\For{$k \gets l_0$ \KwTo $\deep$}{
    $SelectedLayers.append\big(
    \{\,n \mid n \in LayersList[k].out\_features\}\big)$\;
}

$PathsAsNodes \gets \textsc{CartesianProduct}(SelectedLayers)$\;

\Return{$PathsAsNodes$}
\caption{ListOfPathsAsNodesAtLayer}
\label{algo:ListOfPathsAsNodesAtLayer}
\end{algorithm}
\noindent\rule[7pt]{\linewidth}{0.4pt}

\noindent\rule[7pt]{\linewidth}{0.4pt}
\begin{algorithm}[H]
\DontPrintSemicolon
\caption{PathLifting}
\label{algo:PathLifting}

\KwData{$LayersList$: ordered list of network layers, $\theta$: network parameters}
\KwResult{Pathlifting tensor $\PathL(\theta)$}

$\PathL \gets \mathbf{1}_{\dimy \times \dimPathL_1}$\;
$StoredPaths = \{\}$\;

\ForEach{output node $q$}{
    $StoredPaths[q] \gets 0$\;
}

$PathsAsNodes \gets \textsc{ListOfPathsAsNodesAtLayer}(1, False)$\;

\BlankLine
\tcp{Paths starting from the input layer}

\ForEach{$NodesList \in PathsAsNodes$}{
    $q_p \gets$ last node of $NodesList$\;
    $i_p \gets StoredPaths[q_p]$\;

    \For{$j \gets 0$ \KwTo $|NodesList|-2$}{
        $k \gets$ index of the parameter connecting
        $NodesList[j]$ and $NodesList[j+1]$\;

        $\PathL[q_p,i_p]
            \gets
            \PathL[q_p,i_p]\theta[k]$\;
    }

    $StoredPaths[q_p] \gets StoredPaths[q_p]+1$\;
}

\BlankLine
\tcp{Paths starting from a bias node at layer $l\geq 2$}

\For{$l \gets 2$ \KwTo $\deep$}{

    $PathsAsNodes
    \gets
    \textsc{ListOfPathsAsNodesAtLayer}(l, True)$\;

    \ForEach{$NodesList \in PathsAsNodes$}{

        $q_p \gets$ last node of $NodesList$\;
        $i_p \gets StoredPaths[q_p]$\;

        $b \gets$ index of parameter connecting the bias node to the first node in $NodesList$\;
        $\PathL[q_p,i_p] \gets \theta_b$\;

        \For{$j \gets 0$ \KwTo $|NodesList|-2$}{

            $k \gets$ index of the parameter connecting
            $NodesList[j]$ and $NodesList[j+1]$\;

            $\PathL[q_p,i_p]
                \gets
                \PathL[q_p,i_p]\cdot\theta[k]$\;
        }

        $StoredPaths[q_p]
            \gets
            StoredPaths[q_p]+1$\;
    }
    
}

\Return{$\PathL$}
\caption{PathLifting}
\label{algo:PathLifting}
\end{algorithm}
\noindent\rule[7pt]{\linewidth}{0.4pt}

\noindent\rule[7pt]{\linewidth}{0.4pt}
\begin{algorithm}[H]
\DontPrintSemicolon
\caption{SkeletonMatrix}
\label{algo:SkeletonMatrix}

\KwData{$LayersList$: ordered list of network layers,
$\theta$: network parameters}
\KwResult{Skeleton tensor $\Skeleton$}

$\Skeleton \gets \mathbf{0}_{\dimy \times \dimPathL_1 \times \dimTheta}$\;
$StoredPaths = \{\}$\;

\ForEach{output node $q$}{
    $StoredPaths[q] \gets 0$\;
}

$PathsAsNodes \gets \textsc{ListOfPathsAsNodesAtLayer}(1, False)$\;

\BlankLine
\tcp{Paths starting from the input layer}

\ForEach{$NodesList \in PathsAsNodes$}{

    $q_p \gets$ last node of $NodesList$\;
    $i_p \gets StoredPaths[q_p]$\;

    \For{$j \gets 0$ \KwTo $|NodesList|-2$}{

        $k \gets$ index of the edge connecting
        $NodesList[j]$ and $NodesList[j+1]$\;

        $\Skeleton[q_p,i_p,k] \gets 1$\;
    }

    $StoredPaths[q_p] \gets StoredPaths[q_p]+1$\;
}

\BlankLine
\tcp{Paths starting from a bias node at layer $l\geq 2$}

\For{$l \gets 2$ \KwTo $\deep$}{

    $PathsAsNodes
        \gets
        \textsc{ListOfPathsAsNodesAtLayer}(l, True)$\;

    \ForEach{$NodesList \in PathsAsNodes$}{

        $q_p \gets$ last node of $NodesList$\;
        $i_p \gets StoredPaths[q_p]$\;
        $b \gets$ index of parameter connecting the bias node to the first node in $NodesList$\;
        
        $\Skeleton[q_p,i_p,b] \gets 1$\;

        \For{$j \gets 0$ \KwTo $|NodesList|-2$}{

            $k \gets$ index of the edge connecting
            $NodesList[j]$ and $NodesList[j+1]$\;

            $\Skeleton[q_p,i_p,k] \gets 1$\;
        }

        $StoredPaths[q_p] \gets StoredPaths[q_p]+1$\;
    }
}

\Return{$\Skeleton$}
\end{algorithm}
\noindent\rule[7pt]{\linewidth}{0.4pt}